\newmdtheoremenv{boxthm}{Theorem}
\newmdtheoremenv{boxlem}{Lemma}
\newmdtheoremenv{boxcor}{Corollary}
\newmdtheoremenv{boxdef}{Definition}
\pgfplotsset{compat=1.17}
\definecolor{beige}{RGB}{245, 245, 220}
\definecolor{darkgrey}{RGB}{75, 75, 75}
\definecolor{lightgrey}{RGB}{250, 250, 250}
\tikzstyle{dash} = [dashed, -latex,>=latex]
\tikzstyle{line} = [draw, -latex,>=latex]
\tikzstyle{smallbox} = [draw, minimum size=5.0mm]
\tikzstyle{box} = [draw, minimum size=7.0mm]
\tikzstyle{bigbox} = [draw, minimum size=10.0mm]
\tikzstyle{switch} = [trapezium, trapezium angle=120, draw, rotate=90,
\tikzstyle{roundbox} = [draw, circle, inner sep=0pt, minimum size=3mm]
\tikzstyle{clamped} = [draw, fill=black, minimum size=0.15cm]
\tikzstyle{msgcircle} = [shape=circle, draw, inner sep=0pt, minimum
\tikzstyle{darkmsgcircle} = [shape=circle, draw, inner sep=0pt, minimum
\tikzstyle{msgdoublecircle} = [shape=circle, double, double
\tikzstyle{darkmsgdoublecircle} = [shape=circle, double, double
\newcommand{\msg}[6]{
      \ifthenelse{\isin{#1}{left} \AND \isin{#2}{down}}{
            \coordinate (anchor) at ($({#3})!{#5}!({#4})$);
            \node[xshift=-6.0mm] at (anchor) {#6};
            \node[xshift=-1.0mm] at (anchor) {$\downarrow$};
      }{}
      \ifthenelse{\isin{#1}{right} \AND \isin{#2}{down}}{
            \coordinate (anchor) at ($({#3})!{#5}!({#4})$);
            \node[xshift=6.0mm] at (anchor) {#6};
            \node[xshift=1.0mm] at (anchor) {$\downarrow$};
      }{}

      \ifthenelse{\isin{#1}{down} \AND \isin{#2}{right}}{
            \coordinate (anchor) at ($({#3})!{#5}!({#4})$);
            \node[ yshift=-4.0mm] at (anchor) {#6};
            \node[yshift=-1.0mm] at (anchor) {$\rightarrow$};
      }{}
      \ifthenelse{\isin{#1}{up} \AND \isin{#2}{right}}{
            \coordinate (anchor) at ($({#3})!{#5}!({#4})$);
            \node[ yshift=4.0mm] at (anchor) {#6};
            \node[yshift=1.0mm] at (anchor) {$\rightarrow$};
      }{}

      \ifthenelse{\isin{#1}{down} \AND \isin{#2}{left}}{
            \coordinate (anchor) at ($({#3})!{#5}!({#4})$);
            \node[ yshift=-4.0mm] at (anchor) {#6};
            \node[yshift=-1.0mm] at (anchor) {$\leftarrow$};
      }{}
      \ifthenelse{\isin{#1}{up} \AND \isin{#2}{left}}{
            \coordinate (anchor) at ($({#3})!{#5}!({#4})$);
            \node[ yshift=4.0mm] at (anchor) {#6};
            \node[yshift=1.0mm] at (anchor) {$\leftarrow$};
      }{}

      \ifthenelse{\isin{#1}{left} \AND \isin{#2}{up}}{
            \coordinate (anchor) at ($({#3})!{#5}!({#4})$);
            \node[ xshift=-6.0mm] at (anchor) {#6};
            \node[xshift=-1.0mm] at (anchor) {$\uparrow$};
      }{}
      \ifthenelse{\isin{#1}{right} \AND \isin{#2}{up}}{
            \coordinate (anchor) at ($({#3})!{#5}!({#4})$);
            \node[ xshift=6.0mm] at (anchor) {#6};
            \node[xshift=1.0mm] at (anchor) {$\uparrow$};
      }{}
}
\newcommand{\msgcircle}[6]{
      \ifthenelse{\isin{#1}{left} \AND \isin{#2}{down}}{
            \coordinate (anchor) at ($({#3})!{#5}!({#4})$);
            \node[msgcircle,xshift=-5.0mm] at (anchor) {#6};
            \node[xshift=-1.5mm] at (anchor) {$\downarrow$};
      }{}
      \ifthenelse{\isin{#1}{right} \AND \isin{#2}{down}}{
            \coordinate (anchor) at ($({#3})!{#5}!({#4})$);
            \node[msgcircle,xshift=5.0mm] at (anchor) {#6};
            \node[xshift=1.5mm] at (anchor) {$\downarrow$};
      }{}

      \ifthenelse{\isin{#1}{down} \AND \isin{#2}{right}}{
            \coordinate (anchor) at ($({#3})!{#5}!({#4})$);
            \node[msgcircle, yshift=-5.0mm] at (anchor) {#6};
            \node[yshift=-2.0mm] at (anchor) {$\rightarrow$};
      }{}
      \ifthenelse{\isin{#1}{up} \AND \isin{#2}{right}}{
            \coordinate (anchor) at ($({#3})!{#5}!({#4})$);
            \node[msgcircle, yshift=5.0mm] at (anchor) {#6};
            \node[yshift=2.0mm] at (anchor) {$\rightarrow$};
      }{}

      \ifthenelse{\isin{#1}{down} \AND \isin{#2}{left}}{
            \coordinate (anchor) at ($({#3})!{#5}!({#4})$);
            \node[msgcircle, yshift=-5.0mm] at (anchor) {#6};
            \node[yshift=-2.0mm] at (anchor) {$\leftarrow$};
      }{}
      \ifthenelse{\isin{#1}{up} \AND \isin{#2}{left}}{
            \coordinate (anchor) at ($({#3})!{#5}!({#4})$);
            \node[msgcircle, yshift=5.0mm] at (anchor) {#6};
            \node[yshift=2.0mm] at (anchor) {$\leftarrow$};
      }{}

      \ifthenelse{\isin{#1}{left} \AND \isin{#2}{up}}{
            \coordinate (anchor) at ($({#3})!{#5}!({#4})$);
            \node[msgcircle, xshift=-5.0mm] at (anchor) {#6};
            \node[xshift=-1.5mm] at (anchor) {$\uparrow$};
      }{}
      \ifthenelse{\isin{#1}{right} \AND \isin{#2}{up}}{
            \coordinate (anchor) at ($({#3})!{#5}!({#4})$);
            \node[msgcircle, xshift=5.0mm] at (anchor) {#6};
            \node[xshift=1.5mm] at (anchor) {$\uparrow$};
      }{}
}
\newcommand{\darkmsg}[6]{
      \ifthenelse{\isin{#1}{left} \AND \isin{#2}{down}}{
            \coordinate (anchor) at ($({#3})!{#5}!({#4})$);
            \node[darkmsgcircle, xshift=-5mm] at (anchor) {#6};
            \node[xshift=-1.5mm] at (anchor) {$\downarrow$};
      }{}
      \ifthenelse{\isin{#1}{right} \AND \isin{#2}{down}}{
            \coordinate (anchor) at ($({#3})!{#5}!({#4})$);
            \node[darkmsgcircle, xshift=5mm] at (anchor) {#6};
            \node[xshift=1.5mm] at (anchor) {$\downarrow$};
      }{}

      \ifthenelse{\isin{#1}{down} \AND \isin{#2}{right}}{
            \coordinate (anchor) at ($({#3})!{#5}!({#4})$);
            \node[darkmsgcircle, yshift=-5.0mm] at (anchor) {#6};
            \node[yshift=-2.0mm] at (anchor) {$\rightarrow$};
      }{}
      \ifthenelse{\isin{#1}{up} \AND \isin{#2}{right}}{
            \coordinate (anchor) at ($({#3})!{#5}!({#4})$);
            \node[darkmsgcircle, yshift=5.0mm] at (anchor) {#6};
            \node[yshift=2.0mm] at (anchor) {$\rightarrow$};
      }{}

      \ifthenelse{\isin{#1}{down} \AND \isin{#2}{left}}{
            \coordinate (anchor) at ($({#3})!{#5}!({#4})$);
            \node[darkmsgcircle, yshift=-5.0mm] at (anchor) {#6};
            \node[yshift=-2.0mm] at (anchor) {$\leftarrow$};
      }{}
      \ifthenelse{\isin{#1}{up} \AND \isin{#2}{left}}{
            \coordinate (anchor) at ($({#3})!{#5}!({#4})$);
            \node[darkmsgcircle, yshift=5.0mm] at (anchor) {#6};
            \node[yshift=2.0mm] at (anchor) {$\leftarrow$};
      }{}

      \ifthenelse{\isin{#1}{left} \AND \isin{#2}{up}}{
            \coordinate (anchor) at ($({#3})!{#5}!({#4})$);
            \node[darkmsgcircle, xshift=-5.0mm] at (anchor) {#6};
            \node[xshift=-1.5mm] at (anchor) {$\uparrow$};
      }{}
      \ifthenelse{\isin{#1}{right} \AND \isin{#2}{up}}{
            \coordinate (anchor) at ($({#3})!{#5}!({#4})$);
            \node[darkmsgcircle, xshift=5.0mm] at (anchor) {#6};
            \node[xshift=1.5mm] at (anchor) {$\uparrow$};
      }{}
}
\newcommand{\bwmsg}[6]{
      \ifthenelse{\isin{#1}{left} \AND \isin{#2}{down}}{
            \coordinate (anchor) at ($({#3})!{#5}!({#4})$);
            \node[msgdoublecircle, xshift=-5.5mm] at (anchor) {#6};
            \node[xshift=-1.5mm] at (anchor) {$\downarrow$};
      }{}
      \ifthenelse{\isin{#1}{right} \AND \isin{#2}{down}}{
            \coordinate (anchor) at ($({#3})!{#5}!({#4})$);
            \node[msgdoublecircle, xshift=5.5mm] at (anchor) {#6};
            \node[xshift=1.5mm] at (anchor) {$\downarrow$};
      }{}

      \ifthenelse{\isin{#1}{down} \AND \isin{#2}{right}}{
            \coordinate (anchor) at ($({#3})!{#5}!({#4})$);
            \node[msgdoublecircle, yshift=-6.0mm] at (anchor) {#6};
            \node[yshift=-2.0mm] at (anchor) {$\rightarrow$};
      }{}
      \ifthenelse{\isin{#1}{up} \AND \isin{#2}{right}}{
            \coordinate (anchor) at ($({#3})!{#5}!({#4})$);
            \node[msgdoublecircle, yshift=6.0mm] at (anchor) {#6};
            \node[yshift=2.0mm] at (anchor) {$\rightarrow$};
      }{}

      \ifthenelse{\isin{#1}{down} \AND \isin{#2}{left}}{
            \coordinate (anchor) at ($({#3})!{#5}!({#4})$);
            \node[msgdoublecircle, yshift=-6.0mm] at (anchor) {#6};
            \node[yshift=-2.0mm] at (anchor) {$\leftarrow$};
      }{}
      \ifthenelse{\isin{#1}{up} \AND \isin{#2}{left}}{
            \coordinate (anchor) at ($({#3})!{#5}!({#4})$);
            \node[msgdoublecircle, yshift=6.0mm] at (anchor) {#6};
            \node[yshift=2.0mm] at (anchor) {$\leftarrow$};
      }{}

      \ifthenelse{\isin{#1}{left} \AND \isin{#2}{up}}{
            \coordinate (anchor) at ($({#3})!{#5}!({#4})$);
            \node[msgdoublecircle, xshift=-5.5mm] at (anchor) {#6};
            \node[xshift=-1.5mm] at (anchor) {$\uparrow$};
      }{}
      \ifthenelse{\isin{#1}{right} \AND \isin{#2}{up}}{
            \coordinate (anchor) at ($({#3})!{#5}!({#4})$);
            \node[msgdoublecircle, xshift=5.5mm] at (anchor) {#6};
            \node[xshift=1.5mm] at (anchor) {$\uparrow$};
      }{}
}
\newcommand{\bwdarkmsg}[6]{
      \ifthenelse{\isin{#1}{left} \AND \isin{#2}{down}}{
            \coordinate (anchor) at ($({#3})!{#5}!({#4})$);
            \node[darkmsgdoublecircle, xshift=-5.5mm] at (anchor) {#6};
            \node[xshift=-1.5mm] at (anchor) {$\downarrow$};
      }{}
      \ifthenelse{\isin{#1}{right} \AND \isin{#2}{down}}{
            \coordinate (anchor) at ($({#3})!{#5}!({#4})$);
            \node[darkmsgdoublecircle, xshift=5.5mm] at (anchor) {#6};
            \node[xshift=1.5mm] at (anchor) {$\downarrow$};
      }{}

      \ifthenelse{\isin{#1}{down} \AND \isin{#2}{right}}{
            \coordinate (anchor) at ($({#3})!{#5}!({#4})$);
            \node[darkmsgdoublecircle, yshift=-6.0mm] at (anchor) {#6};
            \node[yshift=-2.0mm] at (anchor) {$\rightarrow$};
      }{}
      \ifthenelse{\isin{#1}{up} \AND \isin{#2}{right}}{
            \coordinate (anchor) at ($({#3})!{#5}!({#4})$);
            \node[darkmsgdoublecircle, yshift=6.0mm] at (anchor) {#6};
            \node[yshift=2.0mm] at (anchor) {$\rightarrow$};
      }{}

      \ifthenelse{\isin{#1}{down} \AND \isin{#2}{left}}{
            \coordinate (anchor) at ($({#3})!{#5}!({#4})$);
            \node[darkmsgdoublecircle, yshift=-6.0mm] at (anchor) {#6};
            \node[yshift=-2.0mm] at (anchor) {$\leftarrow$};
      }{}
      \ifthenelse{\isin{#1}{up} \AND \isin{#2}{left}}{
            \coordinate (anchor) at ($({#3})!{#5}!({#4})$);
            \node[darkmsgdoublecircle, yshift=6.0mm] at (anchor) {#6};
            \node[yshift=2.0mm] at (anchor) {$\leftarrow$};
      }{}

      \ifthenelse{\isin{#1}{left} \AND \isin{#2}{up}}{
            \coordinate (anchor) at ($({#3})!{#5}!({#4})$);
            \node[darkmsgdoublecircle, xshift=-5.5mm] at (anchor) {#6};
            \node[xshift=-1.5mm] at (anchor) {$\uparrow$};
      }{}
      \ifthenelse{\isin{#1}{right} \AND \isin{#2}{up}}{
            \coordinate (anchor) at ($({#3})!{#5}!({#4})$);
            \node[darkmsgdoublecircle, xshift=5.5mm] at (anchor) {#6};
            \node[xshift=1.5mm] at (anchor) {$\uparrow$};
      }{}
}
\DeclareRobustCommand{\cev}[1]{%
      \mathpalette\do@cev{#1}%
}
\newcommand{\do@cev}[2]{%
      \fix@cev{#1}{+}%

      \reflectbox{$\m@th#1\vec{\reflectbox{$\fix@cev{#1}{-}\m@th#1#2\fix@cev{#1}{+}$}}$}%
      \fix@cev{#1}{-}%
}
\newcommand{\fix@cev}[2]{%
      \ifx#1\displaystyle
            \mkern#23mu
      \else
            \ifx#1\textstyle
                  \mkern#23mu
            \else
                  \ifx#1\scriptstyle
                        \mkern#22mu
                  \else
                        \mkern#22mu
                  \fi
            \fi
      \fi
}
\pgfplotsset{%
      layers/standard/.define layer set={%
                  background,axis background,axis grid,axis ticks,axis lines,axis tick labels,pre main,main,axis descriptions,axis foreground%
            }{
                  grid style={/pgfplots/on layer=axis grid},%
                  tick style={/pgfplots/on layer=axis ticks},%
                  axis line style={/pgfplots/on layer=axis lines},%
                  label style={/pgfplots/on layer=axis descriptions},%
                  legend style={/pgfplots/on layer=axis descriptions},%
                  title style={/pgfplots/on layer=axis descriptions},%
                  colorbar style={/pgfplots/on layer=axis descriptions},%
                  ticklabel style={/pgfplots/on layer=axis tick labels},%
                  axis background@ style={/pgfplots/on layer=axis background},%
                  3d box foreground style={/pgfplots/on layer=axis foreground},%
            },
}
\pgfplotsset{
      colormap={plots2}{rgb(0.00000000)=(0.26700400,0.00487400,0.32941500)
                  rgb(0.00392157)=(0.26851000,0.00960500,0.33542700)
                  rgb(0.00784314)=(0.26994400,0.01462500,0.34137900)
                  rgb(0.01176471)=(0.27130500,0.01994200,0.34726900)
                  rgb(0.01568627)=(0.27259400,0.02556300,0.35309300)
                  rgb(0.01960784)=(0.27380900,0.03149700,0.35885300)
                  rgb(0.02352941)=(0.27495200,0.03775200,0.36454300)
                  rgb(0.02745098)=(0.27602200,0.04416700,0.37016400)
                  rgb(0.03137255)=(0.27701800,0.05034400,0.37571500)
                  rgb(0.03529412)=(0.27794100,0.05632400,0.38119100)
                  rgb(0.03921569)=(0.27879100,0.06214500,0.38659200)
                  rgb(0.04313725)=(0.27956600,0.06783600,0.39191700)
                  rgb(0.04705882)=(0.28026700,0.07341700,0.39716300)
                  rgb(0.05098039)=(0.28089400,0.07890700,0.40232900)
                  rgb(0.05490196)=(0.28144600,0.08432000,0.40741400)
                  rgb(0.05882353)=(0.28192400,0.08966600,0.41241500)
                  rgb(0.06274510)=(0.28232700,0.09495500,0.41733100)
                  rgb(0.06666667)=(0.28265600,0.10019600,0.42216000)
                  rgb(0.07058824)=(0.28291000,0.10539300,0.42690200)
                  rgb(0.07450980)=(0.28309100,0.11055300,0.43155400)
                  rgb(0.07843137)=(0.28319700,0.11568000,0.43611500)
                  rgb(0.08235294)=(0.28322900,0.12077700,0.44058400)
                  rgb(0.08627451)=(0.28318700,0.12584800,0.44496000)
                  rgb(0.09019608)=(0.28307200,0.13089500,0.44924100)
                  rgb(0.09411765)=(0.28288400,0.13592000,0.45342700)
                  rgb(0.09803922)=(0.28262300,0.14092600,0.45751700)
                  rgb(0.10196078)=(0.28229000,0.14591200,0.46151000)
                  rgb(0.10588235)=(0.28188700,0.15088100,0.46540500)
                  rgb(0.10980392)=(0.28141200,0.15583400,0.46920100)
                  rgb(0.11372549)=(0.28086800,0.16077100,0.47289900)
                  rgb(0.11764706)=(0.28025500,0.16569300,0.47649800)
                  rgb(0.12156863)=(0.27957400,0.17059900,0.47999700)
                  rgb(0.12549020)=(0.27882600,0.17549000,0.48339700)
                  rgb(0.12941176)=(0.27801200,0.18036700,0.48669700)
                  rgb(0.13333333)=(0.27713400,0.18522800,0.48989800)
                  rgb(0.13725490)=(0.27619400,0.19007400,0.49300100)
                  rgb(0.14117647)=(0.27519100,0.19490500,0.49600500)
                  rgb(0.14509804)=(0.27412800,0.19972100,0.49891100)
                  rgb(0.14901961)=(0.27300600,0.20452000,0.50172100)
                  rgb(0.15294118)=(0.27182800,0.20930300,0.50443400)
                  rgb(0.15686275)=(0.27059500,0.21406900,0.50705200)
                  rgb(0.16078431)=(0.26930800,0.21881800,0.50957700)
                  rgb(0.16470588)=(0.26796800,0.22354900,0.51200800)
                  rgb(0.16862745)=(0.26658000,0.22826200,0.51434900)
                  rgb(0.17254902)=(0.26514500,0.23295600,0.51659900)
                  rgb(0.17647059)=(0.26366300,0.23763100,0.51876200)
                  rgb(0.18039216)=(0.26213800,0.24228600,0.52083700)
                  rgb(0.18431373)=(0.26057100,0.24692200,0.52282800)
                  rgb(0.18823529)=(0.25896500,0.25153700,0.52473600)
                  rgb(0.19215686)=(0.25732200,0.25613000,0.52656300)
                  rgb(0.19607843)=(0.25564500,0.26070300,0.52831200)
                  rgb(0.20000000)=(0.25393500,0.26525400,0.52998300)
                  rgb(0.20392157)=(0.25219400,0.26978300,0.53157900)
                  rgb(0.20784314)=(0.25042500,0.27429000,0.53310300)
                  rgb(0.21176471)=(0.24862900,0.27877500,0.53455600)
                  rgb(0.21568627)=(0.24681100,0.28323700,0.53594100)
                  rgb(0.21960784)=(0.24497200,0.28767500,0.53726000)
                  rgb(0.22352941)=(0.24311300,0.29209200,0.53851600)
                  rgb(0.22745098)=(0.24123700,0.29648500,0.53970900)
                  rgb(0.23137255)=(0.23934600,0.30085500,0.54084400)
                  rgb(0.23529412)=(0.23744100,0.30520200,0.54192100)
                  rgb(0.23921569)=(0.23552600,0.30952700,0.54294400)
                  rgb(0.24313725)=(0.23360300,0.31382800,0.54391400)
                  rgb(0.24705882)=(0.23167400,0.31810600,0.54483400)
                  rgb(0.25098039)=(0.22973900,0.32236100,0.54570600)
                  rgb(0.25490196)=(0.22780200,0.32659400,0.54653200)
                  rgb(0.25882353)=(0.22586300,0.33080500,0.54731400)
                  rgb(0.26274510)=(0.22392500,0.33499400,0.54805300)
                  rgb(0.26666667)=(0.22198900,0.33916100,0.54875200)
                  rgb(0.27058824)=(0.22005700,0.34330700,0.54941300)
                  rgb(0.27450980)=(0.21813000,0.34743200,0.55003800)
                  rgb(0.27843137)=(0.21621000,0.35153500,0.55062700)
                  rgb(0.28235294)=(0.21429800,0.35561900,0.55118400)
                  rgb(0.28627451)=(0.21239500,0.35968300,0.55171000)
                  rgb(0.29019608)=(0.21050300,0.36372700,0.55220600)
                  rgb(0.29411765)=(0.20862300,0.36775200,0.55267500)
                  rgb(0.29803922)=(0.20675600,0.37175800,0.55311700)
                  rgb(0.30196078)=(0.20490300,0.37574600,0.55353300)
                  rgb(0.30588235)=(0.20306300,0.37971600,0.55392500)
                  rgb(0.30980392)=(0.20123900,0.38367000,0.55429400)
                  rgb(0.31372549)=(0.19943000,0.38760700,0.55464200)
                  rgb(0.31764706)=(0.19763600,0.39152800,0.55496900)
                  rgb(0.32156863)=(0.19586000,0.39543300,0.55527600)
                  rgb(0.32549020)=(0.19410000,0.39932300,0.55556500)
                  rgb(0.32941176)=(0.19235700,0.40319900,0.55583600)
                  rgb(0.33333333)=(0.19063100,0.40706100,0.55608900)
                  rgb(0.33725490)=(0.18892300,0.41091000,0.55632600)
                  rgb(0.34117647)=(0.18723100,0.41474600,0.55654700)
                  rgb(0.34509804)=(0.18555600,0.41857000,0.55675300)
                  rgb(0.34901961)=(0.18389800,0.42238300,0.55694400)
                  rgb(0.35294118)=(0.18225600,0.42618400,0.55712000)
                  rgb(0.35686275)=(0.18062900,0.42997500,0.55728200)
                  rgb(0.36078431)=(0.17901900,0.43375600,0.55743000)
                  rgb(0.36470588)=(0.17742300,0.43752700,0.55756500)
                  rgb(0.36862745)=(0.17584100,0.44129000,0.55768500)
                  rgb(0.37254902)=(0.17427400,0.44504400,0.55779200)
                  rgb(0.37647059)=(0.17271900,0.44879100,0.55788500)
                  rgb(0.38039216)=(0.17117600,0.45253000,0.55796500)
                  rgb(0.38431373)=(0.16964600,0.45626200,0.55803000)
                  rgb(0.38823529)=(0.16812600,0.45998800,0.55808200)
                  rgb(0.39215686)=(0.16661700,0.46370800,0.55811900)
                  rgb(0.39607843)=(0.16511700,0.46742300,0.55814100)
                  rgb(0.40000000)=(0.16362500,0.47113300,0.55814800)
                  rgb(0.40392157)=(0.16214200,0.47483800,0.55814000)
                  rgb(0.40784314)=(0.16066500,0.47854000,0.55811500)
                  rgb(0.41176471)=(0.15919400,0.48223700,0.55807300)
                  rgb(0.41568627)=(0.15772900,0.48593200,0.55801300)
                  rgb(0.41960784)=(0.15627000,0.48962400,0.55793600)
                  rgb(0.42352941)=(0.15481500,0.49331300,0.55784000)
                  rgb(0.42745098)=(0.15336400,0.49700000,0.55772400)
                  rgb(0.43137255)=(0.15191800,0.50068500,0.55758700)
                  rgb(0.43529412)=(0.15047600,0.50436900,0.55743000)
                  rgb(0.43921569)=(0.14903900,0.50805100,0.55725000)
                  rgb(0.44313725)=(0.14760700,0.51173300,0.55704900)
                  rgb(0.44705882)=(0.14618000,0.51541300,0.55682300)
                  rgb(0.45098039)=(0.14475900,0.51909300,0.55657200)
                  rgb(0.45490196)=(0.14334300,0.52277300,0.55629500)
                  rgb(0.45882353)=(0.14193500,0.52645300,0.55599100)
                  rgb(0.46274510)=(0.14053600,0.53013200,0.55565900)
                  rgb(0.46666667)=(0.13914700,0.53381200,0.55529800)
                  rgb(0.47058824)=(0.13777000,0.53749200,0.55490600)
                  rgb(0.47450980)=(0.13640800,0.54117300,0.55448300)
                  rgb(0.47843137)=(0.13506600,0.54485300,0.55402900)
                  rgb(0.48235294)=(0.13374300,0.54853500,0.55354100)
                  rgb(0.48627451)=(0.13244400,0.55221600,0.55301800)
                  rgb(0.49019608)=(0.13117200,0.55589900,0.55245900)
                  rgb(0.49411765)=(0.12993300,0.55958200,0.55186400)
                  rgb(0.49803922)=(0.12872900,0.56326500,0.55122900)
                  rgb(0.50196078)=(0.12756800,0.56694900,0.55055600)
                  rgb(0.50588235)=(0.12645300,0.57063300,0.54984100)
                  rgb(0.50980392)=(0.12539400,0.57431800,0.54908600)
                  rgb(0.51372549)=(0.12439500,0.57800200,0.54828700)
                  rgb(0.51764706)=(0.12346300,0.58168700,0.54744500)
                  rgb(0.52156863)=(0.12260600,0.58537100,0.54655700)
                  rgb(0.52549020)=(0.12183100,0.58905500,0.54562300)
                  rgb(0.52941176)=(0.12114800,0.59273900,0.54464100)
                  rgb(0.53333333)=(0.12056500,0.59642200,0.54361100)
                  rgb(0.53725490)=(0.12009200,0.60010400,0.54253000)
                  rgb(0.54117647)=(0.11973800,0.60378500,0.54140000)
                  rgb(0.54509804)=(0.11951200,0.60746400,0.54021800)
                  rgb(0.54901961)=(0.11942300,0.61114100,0.53898200)
                  rgb(0.55294118)=(0.11948300,0.61481700,0.53769200)
                  rgb(0.55686275)=(0.11969900,0.61849000,0.53634700)
                  rgb(0.56078431)=(0.12008100,0.62216100,0.53494600)
                  rgb(0.56470588)=(0.12063800,0.62582800,0.53348800)
                  rgb(0.56862745)=(0.12138000,0.62949200,0.53197300)
                  rgb(0.57254902)=(0.12231200,0.63315300,0.53039800)
                  rgb(0.57647059)=(0.12344400,0.63680900,0.52876300)
                  rgb(0.58039216)=(0.12478000,0.64046100,0.52706800)
                  rgb(0.58431373)=(0.12632600,0.64410700,0.52531100)
                  rgb(0.58823529)=(0.12808700,0.64774900,0.52349100)
                  rgb(0.59215686)=(0.13006700,0.65138400,0.52160800)
                  rgb(0.59607843)=(0.13226800,0.65501400,0.51966100)
                  rgb(0.60000000)=(0.13469200,0.65863600,0.51764900)
                  rgb(0.60392157)=(0.13733900,0.66225200,0.51557100)
                  rgb(0.60784314)=(0.14021000,0.66585900,0.51342700)
                  rgb(0.61176471)=(0.14330300,0.66945900,0.51121500)
                  rgb(0.61568627)=(0.14661600,0.67305000,0.50893600)
                  rgb(0.61960784)=(0.15014800,0.67663100,0.50658900)
                  rgb(0.62352941)=(0.15389400,0.68020300,0.50417200)
                  rgb(0.62745098)=(0.15785100,0.68376500,0.50168600)
                  rgb(0.63137255)=(0.16201600,0.68731600,0.49912900)
                  rgb(0.63529412)=(0.16638300,0.69085600,0.49650200)
                  rgb(0.63921569)=(0.17094800,0.69438400,0.49380300)
                  rgb(0.64313725)=(0.17570700,0.69790000,0.49103300)
                  rgb(0.64705882)=(0.18065300,0.70140200,0.48818900)
                  rgb(0.65098039)=(0.18578300,0.70489100,0.48527300)
                  rgb(0.65490196)=(0.19109000,0.70836600,0.48228400)
                  rgb(0.65882353)=(0.19657100,0.71182700,0.47922100)
                  rgb(0.66274510)=(0.20221900,0.71527200,0.47608400)
                  rgb(0.66666667)=(0.20803000,0.71870100,0.47287300)
                  rgb(0.67058824)=(0.21400000,0.72211400,0.46958800)
                  rgb(0.67450980)=(0.22012400,0.72550900,0.46622600)
                  rgb(0.67843137)=(0.22639700,0.72888800,0.46278900)
                  rgb(0.68235294)=(0.23281500,0.73224700,0.45927700)
                  rgb(0.68627451)=(0.23937400,0.73558800,0.45568800)
                  rgb(0.69019608)=(0.24607000,0.73891000,0.45202400)
                  rgb(0.69411765)=(0.25289900,0.74221100,0.44828400)
                  rgb(0.69803922)=(0.25985700,0.74549200,0.44446700)
                  rgb(0.70196078)=(0.26694100,0.74875100,0.44057300)
                  rgb(0.70588235)=(0.27414900,0.75198800,0.43660100)
                  rgb(0.70980392)=(0.28147700,0.75520300,0.43255200)
                  rgb(0.71372549)=(0.28892100,0.75839400,0.42842600)
                  rgb(0.71764706)=(0.29647900,0.76156100,0.42422300)
                  rgb(0.72156863)=(0.30414800,0.76470400,0.41994300)
                  rgb(0.72549020)=(0.31192500,0.76782200,0.41558600)
                  rgb(0.72941176)=(0.31980900,0.77091400,0.41115200)
                  rgb(0.73333333)=(0.32779600,0.77398000,0.40664000)
                  rgb(0.73725490)=(0.33588500,0.77701800,0.40204900)
                  rgb(0.74117647)=(0.34407400,0.78002900,0.39738100)
                  rgb(0.74509804)=(0.35236000,0.78301100,0.39263600)
                  rgb(0.74901961)=(0.36074100,0.78596400,0.38781400)
                  rgb(0.75294118)=(0.36921400,0.78888800,0.38291400)
                  rgb(0.75686275)=(0.37777900,0.79178100,0.37793900)
                  rgb(0.76078431)=(0.38643300,0.79464400,0.37288600)
                  rgb(0.76470588)=(0.39517400,0.79747500,0.36775700)
                  rgb(0.76862745)=(0.40400100,0.80027500,0.36255200)
                  rgb(0.77254902)=(0.41291300,0.80304100,0.35726900)
                  rgb(0.77647059)=(0.42190800,0.80577400,0.35191000)
                  rgb(0.78039216)=(0.43098300,0.80847300,0.34647600)
                  rgb(0.78431373)=(0.44013700,0.81113800,0.34096700)
                  rgb(0.78823529)=(0.44936800,0.81376800,0.33538400)
                  rgb(0.79215686)=(0.45867400,0.81636300,0.32972700)
                  rgb(0.79607843)=(0.46805300,0.81892100,0.32399800)
                  rgb(0.80000000)=(0.47750400,0.82144400,0.31819500)
                  rgb(0.80392157)=(0.48702600,0.82392900,0.31232100)
                  rgb(0.80784314)=(0.49661500,0.82637600,0.30637700)
                  rgb(0.81176471)=(0.50627100,0.82878600,0.30036200)
                  rgb(0.81568627)=(0.51599200,0.83115800,0.29427900)
                  rgb(0.81960784)=(0.52577600,0.83349100,0.28812700)
                  rgb(0.82352941)=(0.53562100,0.83578500,0.28190800)
                  rgb(0.82745098)=(0.54552400,0.83803900,0.27562600)
                  rgb(0.83137255)=(0.55548400,0.84025400,0.26928100)
                  rgb(0.83529412)=(0.56549800,0.84243000,0.26287700)
                  rgb(0.83921569)=(0.57556300,0.84456600,0.25641500)
                  rgb(0.84313725)=(0.58567800,0.84666100,0.24989700)
                  rgb(0.84705882)=(0.59583900,0.84871700,0.24332900)
                  rgb(0.85098039)=(0.60604500,0.85073300,0.23671200)
                  rgb(0.85490196)=(0.61629300,0.85270900,0.23005200)
                  rgb(0.85882353)=(0.62657900,0.85464500,0.22335300)
                  rgb(0.86274510)=(0.63690200,0.85654200,0.21662000)
                  rgb(0.86666667)=(0.64725700,0.85840000,0.20986100)
                  rgb(0.87058824)=(0.65764200,0.86021900,0.20308200)
                  rgb(0.87450980)=(0.66805400,0.86199900,0.19629300)
                  rgb(0.87843137)=(0.67848900,0.86374200,0.18950300)
                  rgb(0.88235294)=(0.68894400,0.86544800,0.18272500)
                  rgb(0.88627451)=(0.69941500,0.86711700,0.17597100)
                  rgb(0.89019608)=(0.70989800,0.86875100,0.16925700)
                  rgb(0.89411765)=(0.72039100,0.87035000,0.16260300)
                  rgb(0.89803922)=(0.73088900,0.87191600,0.15602900)
                  rgb(0.90196078)=(0.74138800,0.87344900,0.14956100)
                  rgb(0.90588235)=(0.75188400,0.87495100,0.14322800)
                  rgb(0.90980392)=(0.76237300,0.87642400,0.13706400)
                  rgb(0.91372549)=(0.77285200,0.87786800,0.13110900)
                  rgb(0.91764706)=(0.78331500,0.87928500,0.12540500)
                  rgb(0.92156863)=(0.79376000,0.88067800,0.12000500)
                  rgb(0.92549020)=(0.80418200,0.88204600,0.11496500)
                  rgb(0.92941176)=(0.81457600,0.88339300,0.11034700)
                  rgb(0.93333333)=(0.82494000,0.88472000,0.10621700)
                  rgb(0.93725490)=(0.83527000,0.88602900,0.10264600)
                  rgb(0.94117647)=(0.84556100,0.88732200,0.09970200)
                  rgb(0.94509804)=(0.85581000,0.88860100,0.09745200)
                  rgb(0.94901961)=(0.86601300,0.88986800,0.09595300)
                  rgb(0.95294118)=(0.87616800,0.89112500,0.09525000)
                  rgb(0.95686275)=(0.88627100,0.89237400,0.09537400)
                  rgb(0.96078431)=(0.89632000,0.89361600,0.09633500)
                  rgb(0.96470588)=(0.90631100,0.89485500,0.09812500)
                  rgb(0.96862745)=(0.91624200,0.89609100,0.10071700)
                  rgb(0.97254902)=(0.92610600,0.89733000,0.10407100)
                  rgb(0.97647059)=(0.93590400,0.89857000,0.10813100)
                  rgb(0.98039216)=(0.94563600,0.89981500,0.11283800)
                  rgb(0.98431373)=(0.95530000,0.90106500,0.11812800)
                  rgb(0.98823529)=(0.96489400,0.90232300,0.12394100)
                  rgb(0.99215686)=(0.97441700,0.90359000,0.13021500)
                  rgb(0.99607843)=(0.98386800,0.90486700,0.13689700)
                  rgb(1.00000000)=(0.99324800,0.90615700,0.14393600)},
}
\newcommand{\bdv}[2][] {\todo[inline,backgroundcolor=blue!20!white, #1]{(Bert) #2}}
\newcommand{\tvdl}[2][] {\todo[inline,backgroundcolor=red!20!white, #1]{(Thijs) #2}}
\newcommand{\ml}[2][] {\todo[inline,backgroundcolor=green!20!white, #1]{(Mykola) #2}}
\newcommand{\woutern}[2][] {\todo[inline,backgroundcolor=orange!20!white, #1]{(WouterN) #2}}
\newcommand{\refappx}[1]{\hyperref[#1]{Appendix~\ref*{#1}}}
\newcommand{\capsecref}[1]{\hyperref[#1]{Section~\ref*{#1}}}
\newif\ifanonymous
\title{A~Message~Passing~Realization of Expected~Free~Energy~Minimization}
\author{Anonymous Authors}
\authorrunning{Anonymous Authors}
\institute{Anonymous Affiliation}
\author{Wouter W. L. Nuijten \inst{1} \and
Mykola Lukashchuk \inst{1} \and
Thijs van de Laar \inst{1} \and Bert de Vries\inst{1,2}}
\authorrunning{W. W. L. Nuijten et al.}
\institute{Eindhoven University of Technology, 5612 AP Eindhoven, the Netherlands \and  GN Hearing, 5612 AB Eindhoven, The Netherlands}
\begin{document}

\maketitle

\begin{abstract}
    We present a message passing approach to Expected Free Energy (EFE) minimization on factor graphs, based on the theory introduced in \cite{devries_expected_2025}.
    By reformulating EFE minimization as Variational Free Energy minimization with epistemic priors,
    we transform a combinatorial search problem into a tractable inference problem solvable through standard variational techniques.
    Applying our message passing method to factorized state-space models enables efficient policy inference.
    We evaluate our method on environments with epistemic uncertainty:
    a stochastic gridworld and a partially observable Minigrid task.
    Agents using our approach consistently outperform conventional KL-control agents on these tasks,
    showing more robust planning and efficient exploration under uncertainty.
    In the stochastic gridworld environment, EFE-minimizing agents avoid risky paths, while in the partially observable minigrid setting,
    they conduct more systematic information-seeking.
    This approach bridges active inference theory with practical implementations, providing empirical evidence for the efficiency of epistemic priors in artificial agents.

    \keywords{Active Inference \and Epistemic Planning \and Expected Free Energy \and Factor Graphs \and Message Passing}
\end{abstract}
\section{Introduction}\label{sec:introduction}

Expected Free Energy (EFE) minimization, rooted in the Free Energy Principle, provides a framework for modeling intelligent behavior by unifying
reward-seeking (pragmatic) and information-seeking (epistemic) drives \cite{friston_freeenergy_2010,friston_active_2015}.
While control-as-inference approaches have made significant advances in formulating decision-making as probabilistic inference
problems \cite{ito_kullback_2022,attias_planning_2003}, EFE minimization extends this paradigm by explicitly accounting for epistemic uncertainty \cite{dacosta_active_2020}, though its practical application faces computational challenges for extended planning horizons and high-dimensional state-spaces \cite{paul_active_2021}.

Traditional approaches to computing EFE often involve evaluating all possible action sequences, which becomes intractable for non-trivial problems.
While various approximations have been developed to address this tractability issue, traditional approaches typically use EFE as a cost function for evaluating policies, rather than as an objective functional for variational optimization of beliefs \cite{paul_efficient_2024,champion_branching_2022,friston_active_2025}.

This paper provides empirical validation of the theoretical foundation presented in \cite{devries_expected_2025}, which reformulates EFE minimization directly as a variational inference problem on factor graphs.
By introducing appropriate epistemic priors, we show that minimizing EFE can be achieved through standard Variational Free Energy (VFE) minimization,
making it consistent with the Free Energy Principle's core tenet that all processes are fundamentally based on variational free energy minimization.

We implement this approach through an iterative message passing algorithm on factorized state-space models.
We evaluate its performance in environments with different uncertainty characteristics: a stochastic gridworld with
perilous transitions and a partially observable Minigrid environment requiring active exploration for successful completion.
Our results confirm that agents using our inference-based method exhibit the same characteristic advantages over KL-control agents
as direct EFE computation, particularly in handling epistemic uncertainty.
This validates our approach while providing a computationally efficient framework for planning under uncertainty.

The remainder of this paper is organized as follows:
\begin{itemize}
    \item \capsecref{sec:background} provides background on necessary materials.
    \item \capsecref{sec:relatedwork} discusses related work in control as inference and active inference.
    \item \capsecref{sec:methodology} presents our methodology for reformulating EFE minimization as an inference problem.
    \item \capsecref{sec:evaluation} describes our evaluation environments and experimental design.
\end{itemize}
 
\section{Background}\label{sec:background}

\subsection{Variational Inference}
Variational inference (VI) provides a principled framework for approximating complex posterior distributions in Bayesian models \cite{koller_probabilistic_2009,bishop_pattern_2006,blei_variational_2017,winn_variational_2005}. The central challenge in Bayesian inference is computing the posterior distribution $p(\bm{x}|\bm{y})$ of hidden state sequence $\bm{x}$ given an observed data sequence $\bm{y}$, which requires evaluating the model evidence $p(\bm{y})$ \cite{cox_probability_1946,jaynes_probability_2003}. This normalization constant is typically intractable for complex models.  

VI reformulates inference as an optimization problem by approximating the Bayesian posterior with a simpler, tractable distribution $q(\bm{x})$ from a family of distributions $\mathcal{Q}$ \cite{blei_variational_2017}. The functional we will minimize is the Variational Free Energy (VFE). The VFE is defined as $F[q] = D_{KL}(q(\bm{x}) \| p(\bm{x}|\bm{y})) - \log p(\bm{y})$, making it clear that minimizing the VFE is equivalent to minimizing the KL divergence since $\log p(\bm{y})$ is constant with respect to $q$. The VFE also provides a tractable upper bound on the negative log evidence, with $F[q] \geq -\log p(\bm{y})$ \cite{kingma_autoencoding_2022}. 


\subsection{Factor Graphs}
Factor graphs are a specific type of probabilistic graphical model that explicitly represents the factorization structure of the model, where factors represent (conditional) probability distributions. In our work, we employ Forney-style factor graphs (FFGs) \cite{forney_codes_2001}, which offer a specific representation approach with notation following \cite{loeliger_introduction_2004}.

An FFG represents a factorized function $f(\bm{s})$ as \begin{equation}
    f(\bm{s}) =\prod_{a\in\mathcal{V}} f_a(\bm{s}_a),
\end{equation}
where $\bm{s}$ encompasses all variables in the model, and $\bm{s}_a \subseteq \bm{s}$ represents the subset of variables that participate in factor $f_a$.

In the FFG representation, nodes ($a \in \mathcal{V}$) correspond to factors in the model, while edges ($\mathcal{E} \subseteq \mathcal{V} \times \mathcal{V}$) represent variables. An edge connects to a node precisely when the variable appears as an argument in the corresponding factor. We denote the set of edges connected to node $a \in \mathcal{V}$ as $\mathcal{E}(a)$, and the nodes connected to edge $i \in \mathcal{E}$ as $\mathcal{V}(i)$.

To illustrate, the FFG representation of the factorized function
\begin{equation}\label{eq:ffg:examplefactorization}
    f(s_1,s_2,s_3,s_4) = f_a(s_1) f_b(s_1,s_2) f_c(s_3) f_d(s_2,s_3,s_4)
\end{equation}
in shown in Figure~\ref{fig:methods:example_ffg}.

\begin{figure}[tb]
    \centering
  

    
        
        
    


\begin{tikzpicture}
  
    \node[box] (fa) {$f_a$};
    \node[box, right of=fa, node distance=20mm] (fb) {$f_b$};
    \node[box, right of=fb, node distance=35mm] (fd) {$f_d$};
    \node[box, below of=fd, node distance=20mm] (fc) {$f_c$};
    \node[right of=fd, node distance=20mm] (y) {};

    \draw[-] (fa) -- (fb)
        node[pos=0.5, above] {$s_1$};
    \draw[-] (fb) -- (fd)
        node[pos=0.5, above] {$s_2$};
    \draw[-] (fd) -- (fc)
        node[pos=0.5, left] {$s_3$};
    \draw[-] (fd) -- (y)
        node[pos=0.5, above] {$s_4$};

    \node [fit = (fa), draw=none, inner sep=1.5mm] (fa-box) {};
    \node [fit = (fa-box) (fb), draw, inner sep=1.5mm, dashed] (fb-box) {};

    \node [fit=(fc), draw, inner sep=1.5mm, dashed] (fc-box) {};
    \node [fit = (fd) (fc-box) (y), draw, inner sep=1.5mm, dashed] (fcdy-box) {};
    
    \draw[->] ([yshift=-5mm]fb-box.east) -- ++(10mm,0) node[below=0cm, pos=0.5] {$\overrightarrow{\mu}(s_2)$};
    \draw[->] ([yshift=5.7mm]fcdy-box.west) -- ++(-10mm,0) node[below=0cm, pos=0.5] {$\overleftarrow{\mu}(s_2)$};
    \draw[->] ([xshift=2mm]fc-box.north) -- ++(0,8mm) node[right=0cm, pos=0.5] {$\overrightarrow{\mu}(s_3)$};

        
        
    

\end{tikzpicture}
    \caption{A Forney-style factor graph representation of the factorization in \eqref{eq:ffg:examplefactorization}.}
    \label{fig:methods:example_ffg}
\end{figure}
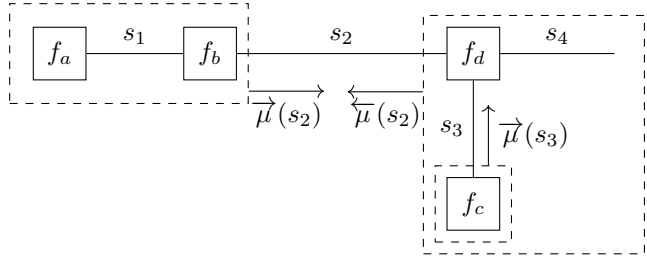

A common approach to realizing efficient variational inference on factor graphs involves the Bethe assumption, which posits that the posterior distribution factorizes as a product of local marginals associated with the nodes and edges of the graph. This structural assumption on the posterior distributions enables the formulation of message passing algorithms that seek out stationary points of the Bethe free energy \cite{yedidia_constructing_2005,senoz_message_2022,dauwels_variational_2007}.


To illustrate the computational benefits of message passing, consider the generative model in \eqref{eq:ffg:examplefactorization}, and assume we are interested in computing $p(s_2)$. This marginal distribution can be obtained by summing out all other variables from the joint
\begin{equation}
    p(s_2) = \sum_{s_1} \sum_{s_3} \sum_{s_4} f(s_1, s_2, s_3, s_4) \,,
    \label{eq:ffg:fullmarginal}
\end{equation}
which, when each $s_i$ can take 10 values, contains about a thousand terms. However, taking into account the factorization of the generative model and the distributive law of the product, \eqref{eq:ffg:fullmarginal} can be rewritten as
\begin{equation}
    p(s_2) = \bigg(\underbrace{\sum_{s_1} f_a(s_1)f_b(s_1,s_2)}_{\overrightarrow{\mu}(s_2)} \bigg) \cdot \bigg(\underbrace{\big(\overbrace{\sum_{s_3}f_c(s_3)}^{\overrightarrow{\mu}(s_3)}\big)\sum_{s_4}f_d(s_2,s_3,s_4)}_{\overleftarrow{\mu}(s_2)}\bigg) \,.
    \label{eq:ffg:messagepassing}
\end{equation}
The computation in \eqref{eq:ffg:messagepassing} requires only a few hundred summations and is preferable from a computational standpoint. In larger models, the number of computations scale linearly with the number of factor nodes, instead of exponentially. The intermediate results $\overrightarrow{\mu}(s_i)$ and $\overleftarrow{\mu}(s_i)$ afford an interpretation as local message in the FFG representation of the model, see \autoref{fig:methods:example_ffg}.
\woutern{Is this enough on message passing or should I add math?}
\bdv{I think you should add an example that show how MP follows from the distributive law applied to inference for $s_2$ in Fig.~\ref{fig:methods:example_ffg}.}
For comprehensive treatments of factor graphs and associated (variational) message passing algorithms, we refer readers to \cite{loeliger_introduction_2004,loeliger_factor_2007,yedidia_constructing_2005,dauwels_variational_2007,senoz_message_2022}.

\section{Related Work}\label{sec:relatedwork}
Autonomous decision-making under uncertainty remains a central challenge in control theory and artificial intelligence. This section reviews key developments that contextualize our contribution.

\subsection{Control as Inference}
The pursuit of efficient and high-performing autonomous systems has driven significant research in control theory. Optimal control \cite{bellman_theory_1954,bellman_dynamic_1966,pontryagin_mathematical_2018} provides a mathematical framework for determining the control inputs that minimize a predefined cost function for a given system. Building upon these foundations, Model Predictive Control (MPC) algorithms address the challenges of real-time control by incorporating a feedback loop and a receding horizon strategy \cite{bertsekas_dynamic_2012,richalet_algorithmic_1976,richalet_model_1978,cutler_dynamic_1979}. This approach allows for online adaptation to disturbances and constraints.

A significant paradigm shift in recent years involves viewing control as an inference problem. This perspective allows the application of powerful probabilistic tools to address control challenges, particularly in complex and uncertain environments. Under deterministic dynamics, the sequential decision-making process in closed-loop receding horizon MPC can be elegantly mapped to inference on a factor graph \cite{levine_reinforcement_2018,lazaro-gredilla_what_2024}. 

When dealing with stochastic dynamics or the need for state estimation under uncertainty, stochastic optimal control methods can be reformulated using variational inference \cite{kappen_optimal_2012,ito_kullback_2022}. Here, the intractable posterior distribution over states and/or controls is approximated by a tractable variational distribution.

Active inference \cite{dacosta_active_2020,dacosta_active_2024} addresses control under uncertainty by proposing that information gained about the system is also a form of reward. The framework suggests that variational inference naturally balances exploration and exploitation by optimizing the Expected Free Energy \cite{friston_active_2015}, which elegantly combines the drive to minimize uncertainty about the environment (information gain) with the need to achieve desired outcomes. However, a current limitation of active inference lies in the computational cost associated with computing the Expected Free Energy \cite{friston_active_2015}, which has spurred recent research into efficient algorithms \cite{paul_efficient_2024,friston_sophisticated_2021,paul_active_2021,champion_branching_2022}.

Recently, \cite{devries_expected_2025} proposed an alternative approach to Expected Free Energy minimization by framing EFE minimization as a regular variational free energy minimization task. This approach is promising for scalable implementation of EFE-minimizing planning algorithms, but offers a theoretical account, without considering practical implementation or empirical validation. In the next section, we will propose a message passing realization of this approach.

\section{Methodology}\label{sec:methodology}

For the main contribution of this paper, we will elaborate on Theorem 1 from \cite{devries_expected_2025}. For convenience, we will repeat the theorem here, albeit without the inclusion of model parameters $\theta$:
\begin{theorem}[Expected Free Energy Theorem]\label{the:efetheorem}
    Consider an agent with generative model $p(\bm{y}, \bm{x}, \bm{u})$, and prior beliefs $\hat{p}(\bm{x})$ about future desired states.

    Consider the Variational Free Energy functional
    \begin{equation}\label{eq:VFE-for-planning}
        F[q] \triangleq E_{q(\bm{y}, \bm{x}, \bm{u})}\bigg[ \log \frac{ \overbrace{q(\bm{y}, \bm{x}, \bm{u})}^{\text{posterior}} }{ \underbrace{p(\bm{y}, \bm{x}, \bm{u})}_{\substack{\text{generative} \\ \text{model}} } \underbrace{\hat{p}(\bm{x})}_{\substack{\text{preference}\\ \text{prior}}} \underbrace{\tilde{p}(\bm{u}) \tilde{p}(\bm{x})}_{\substack{\text{epistemic} \\ \text{priors}}}} \bigg]\,,
    \end{equation}
    where the generative model in the denominator is augmented by both a preference prior $\hat{p}(\cdot)$ and epistemic priors $\tilde{p}(\cdot)$.

    If the epistemic priors are chosen as
    \begin{subequations}\label{eq:epistemic-priors}
        \begin{align}
            \tilde{p}(\bm{u}) & \propto \exp(H[q(\bm{x}|\bm{u})]) \label{eq:epistemic-prior-u}  \\
            \tilde{p}(\bm{x}) & \propto \exp(-H[q(\bm{y}|\bm{x})]) \label{eq:epistemic-prior-x}
        \end{align}
    \end{subequations}

    then $F[q]$ decomposes as
    \begin{align}\label{eq:F=G+complexity}
        F[q] = \underbrace{E_{q(\bm{u})}\big[ G(\bm{u})\big]}_{\substack{ \text{expected policy} \\ \text{costs} }}  + \underbrace{E_{q(\bm{y}, \bm{x}, \bm{u})}\bigg[\log \frac{q(\bm{y}, \bm{x}, \bm{u})}{p(\bm{y}, \bm{x}, \bm{u})}\bigg]}_{\text{complexity}}  + \mathrm{constant}\,,
    \end{align}
    where
    \begin{equation}
        G(\bm{u}) =  \mathbb{E}_{q(\bm{y}, \bm{x}|\bm{u})}\bigg[\log\bigg( \frac{q(\bm{x}|\bm{u})}{\hat{p}(\bm{x})}\cdot \frac{1}{q(\bm{y}|\bm{x})} \bigg) \bigg]
    \end{equation} is the expected free energy as defined in \cite{dacosta_active_2020}.
    In \eqref{eq:epistemic-priors},
    \begin{equation}
        H[q(y|x)] = -\int q(y|x) \log q(y|x) \text{d}y
    \end{equation} is the entropy functional.
\end{theorem}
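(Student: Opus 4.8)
The plan is to prove the decomposition \eqref{eq:F=G+complexity} by direct substitution of the epistemic priors \eqref{eq:epistemic-priors} into the functional \eqref{eq:VFE-for-planning}, followed by a regrouping of terms. First I would split the logarithm in \eqref{eq:VFE-for-planning} into the \emph{complexity} contribution $E_{q(\bm{y},\bm{x},\bm{u})}\!\left[\log\frac{q(\bm{y},\bm{x},\bm{u})}{p(\bm{y},\bm{x},\bm{u})}\right]$, which already appears verbatim in \eqref{eq:F=G+complexity}, and the remaining \emph{prior} contribution $E_q\!\left[-\log\hat{p}(\bm{x}) - \log\tilde{p}(\bm{u}) - \log\tilde{p}(\bm{x})\right]$. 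Everything then reduces to showing that this prior contribution equals $E_{q(\bm{u})}[G(\bm{u})]$ up to an additive constant.

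Next I would substitute the closed forms of the epistemic priors, writing their proportionality constants explicitly: $\log\tilde{p}(\bm{u}) = H[q(\bm{x}|\bm{u})] - \log Z_u$ and $\log\tilde{p}(\bm{x}) = -H[q(\bm{y}|\bm{x})] - \log Z_x$, so that $-\log Z_u - \log Z_x$ is absorbed into the additive $\mathrm{constant}$. Using the definition of the entropy functional to write $H[q(\bm{x}|\bm{u})] = -E_{q(\bm{x}|\bm{u})}[\log q(\bm{x}|\bm{u})]$ and $H[q(\bm{y}|\bm{x})] = -E_{q(\bm{y}|\bm{x})}[\log q(\bm{y}|\bm{x})]$, and applying the tower rule together with the marginal consistency $q(\bm{x}) = \int q(\bm{u})\,q(\bm{x}|\bm{u})\,\mathrm{d}\bm{u}$, the prior contribution becomes
\begin{equation*}
E_{q(\bm{u})}\!\left[\, E_{q(\bm{y},\bm{x}|\bm{u})}\!\left[\log q(\bm{x}|\bm{u}) - \log\hat{p}(\bm{x}) - \log q(\bm{y}|\bm{x})\right]\right] + \mathrm{constant},
\end{equation*}
where the $\bm{y}$-integration is vacuous for the terms not depending on $\bm{y}$. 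The bracketed inner expression is exactly $\log\!\big(\frac{q(\bm{x}|\bm{u})}{\hat{p}(\bm{x})}\cdot\frac{1}{q(\bm{y}|\bm{x})}\big)$, so its expectation against $q(\bm{y},\bm{x}|\bm{u})$ is $G(\bm{u})$, which closes the argument.

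The step I expect to require the most care is the bookkeeping of the variational conditionals and of the normalization constants. On the conditional side, one has to ensure that the object $q(\bm{y}|\bm{x})$ appearing in \eqref{eq:epistemic-prior-x} and in $G(\bm{u})$ is the same conditional that enters the joint $q(\bm{y},\bm{x},\bm{u})$; this is where a structural assumption on the variational family is used, namely $q(\bm{y},\bm{x},\bm{u}) = q(\bm{u})\,q(\bm{x}|\bm{u})\,q(\bm{y}|\bm{x})$ with $q(\bm{y}|\bm{x})$ independent of $\bm{u}$, which is precisely what legitimizes replacing $E_{q(\bm{x}|\bm{u})}E_{q(\bm{y}|\bm{x})}$ by $E_{q(\bm{y},\bm{x}|\bm{u})}$ in the previous step. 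On the normalization side, $Z_u$ and $Z_x$ are themselves functionals of $q$ through the entropies, so ``$\mathrm{constant}$'' must be read as the additive term collecting the log-partition functions of the epistemic priors; under the convention that these priors are instantiated from the current $q$ and then held fixed while $F[q]$ is minimized over the remaining degrees of freedom, this term is genuinely constant for the optimization at hand. The remainder is routine manipulation of expectations and logarithms.
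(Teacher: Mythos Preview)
Your proposal is correct and follows essentially the same route as the paper's proof: split $F[q]$ into the complexity term and a remainder, then substitute the epistemic priors and use the tower property (together with the structural assumption $q(\bm{y},\bm{x},\bm{u})=q(\bm{u})\,q(\bm{x}|\bm{u})\,q(\bm{y}|\bm{x})$) to identify the remainder as $E_{q(\bm{u})}[G(\bm{u})]+\mathrm{constant}$. The only organizational difference is that the paper first factors out $q(\bm{u})/p(\bm{u})$ and then inserts $G(\bm{u})$ by multiplying and dividing inside the inner expectation, verifying afterwards that the leftover term becomes constant under the stated choice of $\tilde{p}(\bm{u})$ and $\tilde{p}(\bm{x})$; your direct-substitution route reaches the same cancellation without the add-and-subtract step.
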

\begin{proof}
    The proof of \eqref{eq:F=G+complexity} is given in \cite[Appendix A]{devries_expected_2025}.
\end{proof}
While \eqref{eq:F=G+complexity} shows that minimization of the $F[q]$ leads to minimization of (expected) $G(u)$, the proof of \eqref{eq:F=G+complexity} is declarative and does not provide an explicit algorithm for minimizing $F[q]$.

In the following sections, we will describe a message passing algorithm on factor graphs that can be used as a practical approach to search for stationary points of the free energy functional.
\subsection{Factorized models and factorized posteriors}

\autoref{the:efetheorem} is a general result, however, in practice, we are often interested in factorized state-space models of the form
\begin{equation} \label{eq:factorized-model}
    p(\bm{y}, \bm{x}, \bm{u}) = p(x_0) \prod_{t=1}^{T} p(y_t | x_t) p(x_t | x_{t-1}, u_t)  p(u_t)
\end{equation}
We can make an additional assumption that the posterior distribution factorizes in the same way as the generative model:
\begin{equation} \label{eq:factorized-posterior}
    q(\bm{y}, \bm{x}, \bm{u}) = q(x_0) \prod_{t=1}^{T} q(y_t | x_t) q(x_t | x_{t-1}, u_t)  q(u_t)\,.
\end{equation}
Note that this is consistent with making the Bethe assumption, which says that the variational posterior distribution can be decomposed into local contributions:
\begin{equation}
    q(\bm{s}) = \prod_{a \in \mathcal{V}}q_a(\bm{s}_a) \prod_{i\in \mathcal{E}}q_i(s_i)^{-1}
\end{equation}
for $\mathcal{G} = (\mathcal{V}, \mathcal{E})$ the underlying FFG.
Under this assumption, we can derive a corollary to \autoref{the:efetheorem} that provides more specific expressions for the epistemic priors.

\begin{corollary} \label{cor:bethe}
    Consider an agent with Variational Free Energy functional as in \eqref{eq:VFE-for-planning}, comprising a generative model \eqref{eq:factorized-model}, a posterior distribution factorized as in \eqref{eq:factorized-posterior}, and a preference prior $\hat{p}(\bm{x}) = \prod_{t=1}^{T} \hat{p}(x_t)$.
    If the epistemic priors are chosen as    \begin{subequations}\label{eq:epistemic-priors-bethe}
        \begin{align}
            \tilde{p}(u_t) & \propto \exp(H[q(x_t, x_{t-1} | u_t)] - H[q(x_{t-1} | u_t)]) \label{eq:epistemic-prior-u-bethe} \\
            \tilde{p}(x_t) & \propto \exp(-H[q(y_t | x_t)]) \label{eq:epistemic-prior-x-bethe}
        \end{align}
    \end{subequations}
    then the Variational Free Energy functional \eqref{eq:VFE-for-planning} decomposes as
    \begin{align}
        F[q] = E_{q(\bm{u})}\big[ G(\bm{u})\big]  + E_{q(\bm{y}, \bm{x}, \bm{u})}\bigg[\log \frac{q(\bm{y}, \bm{x}, \bm{u})}{p(\bm{y}, \bm{x}, \bm{u})}\bigg] \, + \mathrm{constant}.
    \end{align}
\end{corollary}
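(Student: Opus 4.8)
The plan is to obtain \autoref{cor:bethe} as a direct specialization of \autoref{the:efetheorem}. The conclusion is literally the decomposition \eqref{eq:F=G+complexity} again, so the only thing to prove is that, when the variational posterior has the factorized form \eqref{eq:factorized-posterior}, the \emph{global} epistemic priors \eqref{eq:epistemic-priors} required by \autoref{the:efetheorem} coincide — up to $\bm u$- and $\bm x$-independent constants — with the \emph{local} product forms \eqref{eq:epistemic-priors-bethe}. Granting that, the hypotheses of \autoref{the:efetheorem} are satisfied and the claimed identity follows, with the leftover normalizers folded into $\mathrm{constant}$.

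First I would read off, from \eqref{eq:factorized-posterior}, the two conditionals appearing inside the entropies in \eqref{eq:epistemic-priors}. Marginalizing $\bm y$ out and dividing by $q(\bm u) = \prod_{t}q(u_t)$ gives $q(\bm x \mid \bm u) = q(x_0)\prod_{t=1}^{T} q(x_t \mid x_{t-1}, u_t)$, while $q(\bm y \mid \bm x, \bm u) = \prod_{t=1}^{T} q(y_t \mid x_t)$ carries no $\bm u$ dependence, hence $q(\bm y \mid \bm x) = \prod_{t=1}^{T} q(y_t \mid x_t)$. Substituting into the differential-entropy functional and applying the chain rule for entropy (expand $\log$ of the product, then integrate the chain factor by factor) yields
\begin{align*}
    H[q(\bm y \mid \bm x)] &= \sum_{t=1}^{T} H[q(y_t \mid x_t)], \\
    H[q(\bm x \mid \bm u)] &= H[q(x_0)] + \sum_{t=1}^{T}\big( H[q(x_t, x_{t-1} \mid u_t)] - H[q(x_{t-1} \mid u_t)]\big),
\end{align*}
where the $t$-th summand in the second line is the conditional entropy of the transition factor $q(x_t \mid x_{t-1}, u_t)$ written in $H(\text{joint})-H(\text{marginal})$ form. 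Exponentiating, $\exp(-H[q(\bm y \mid \bm x)])$ factorizes as $\prod_t \exp(-H[q(y_t \mid x_t)])$ and $\exp(H[q(\bm x \mid \bm u)])$ factorizes as $\exp(H[q(x_0)])\prod_t \exp\!\big(H[q(x_t,x_{t-1} \mid u_t)] - H[q(x_{t-1} \mid u_t)]\big)$; the factor $\exp(H[q(x_0)])$ is $\bm u$-independent and absorbed into the proportionality constant. This is precisely \eqref{eq:epistemic-priors-bethe}, i.e. $\tilde p(\bm x) = \prod_t \tilde p(x_t)$ and $\tilde p(\bm u) = \prod_t \tilde p(u_t)$ with the stated local factors.

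It then remains only to substitute these priors into \eqref{eq:VFE-for-planning} and apply \autoref{the:efetheorem}: since the epistemic priors meet \eqref{eq:epistemic-priors}, the functional decomposes exactly as in \eqref{eq:F=G+complexity}, which is the asserted identity. The per-factor normalizers of $\tilde p(x_t)$ and $\tilde p(u_t)$, together with $\exp(H[q(x_0)])$ and the constant already present in \autoref{the:efetheorem}, combine into the single additive $\mathrm{constant}$. (The further hypotheses that $p(\bm y,\bm x,\bm u)$ and $\hat p(\bm x)$ factorize over $t$ are not strictly needed for the stated equality, but they make $G(\bm u)$ and the complexity term decompose into per-timestep contributions, which is what the downstream message passing exploits.)

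The main obstacle is the bookkeeping in the second entropy decomposition. The entropy chain rule naturally produces, at step $t$, the entropy of the local transition factor $q(x_t \mid x_{t-1}, u_t)$ averaged against the marginal of $x_{t-1}$ \emph{induced by} $q(\bm x \mid \bm u)$, and that marginal depends a priori on $u_1, \dots, u_{t-1}$, not on $u_t$ alone. To match the claimed $\tilde p(u_t)$, which depends only on $u_t$, one must read the symbols $q(x_{t-1} \mid u_t)$ and $q(x_t, x_{t-1} \mid u_t)$ in \eqref{eq:epistemic-prior-u-bethe} as the \emph{local} edge belief $q(x_{t-1})$ and its product with the local factor $q(x_t \mid x_{t-1}, u_t)$ — the natural reading under the Bethe assumption, where edge marginals are the atomic objects manipulated by message passing. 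I would make this identification explicit (a short remark on what the Bethe factorization makes ``local'') before writing the decomposition, so that the chain-rule terms line up one-for-one with \eqref{eq:epistemic-priors-bethe}.
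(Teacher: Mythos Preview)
Your approach is correct and takes a genuinely different route from the paper. The paper's Appendix~A re-runs the entire decomposition argument of \autoref{the:efetheorem}: it expands $F[q]$, isolates $C(\bm u)$, adds and subtracts the factors of $G(\bm u)$, and then uses the factorized form \eqref{eq:factorized-posterior} only at the very end to break the residual term $\mathbb{E}_{q(\bm x|\bm u)}[-\log q(\bm x|\bm u)\tilde p(\bm u)] + \mathbb{E}_{q(\bm y|\bm x)}[\log q(\bm y|\bm x)/\tilde p(\bm x)]$ into a sum over $t$ that the local priors \eqref{eq:epistemic-priors-bethe} cancel. You instead argue upstream: show that the product of the local priors \eqref{eq:epistemic-priors-bethe} equals the global priors \eqref{eq:epistemic-priors} up to constants, and then invoke \autoref{the:efetheorem} as a black box. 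Your route is more economical and makes the ``corollary'' status explicit, whereas the paper's route is self-contained and surfaces exactly where the factorization is consumed. Both arrive at the same residual bookkeeping issue you flag --- that the chain-rule summand at step $t$ involves the marginal of $x_{t-1}$ induced by $q(\bm x|\bm u)$, which a priori depends on $u_{1:t-1}$ --- and both resolve it identically by reading $q(x_{t-1}|u_t)$ and $q(x_t,x_{t-1}|u_t)$ as the local Bethe beliefs at the transition node. The paper makes this move tacitly when it passes from $\mathbb{E}_{q(\bm x|\bm u)}[\cdot]$ to $\sum_t \mathbb{E}_{q(x_t,x_{t-1}|u_t)}[\cdot]$; your final paragraph makes it explicit, which is an improvement.
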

While this corollary is a special case and a direct application of \autoref{the:efetheorem}, an elaboration of the proof is given in \refappx{sec:bethe-proof}.
This corollary states that the preference and epistemic priors can be reduced to local contributions. We will implement the preference and epistemic priors as factor nodes that act as prior distributions during the inference procedure. A timeslice of the augmented factor graph is shown in \autoref{fig:augmented-fg}.

\begin{wrapfigure}{R}{0.4\textwidth}
    \centering
    \includegraphics[width=\linewidth]{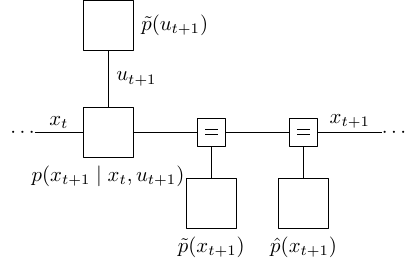}
    \caption{Slice of the factor graph representation of the augmented generative model. The original generative model \eqref{eq:factorized-model} is augmented with epistemic priors $\tilde{p}(u_{t+1})$ and $\tilde{p}(x_{t+1})$, and preference priors $\hat{p}(x_{t+1})$ for future timesteps.}
    \label{fig:augmented-fg}
\end{wrapfigure}
The benefit of this approach is that inference on factor graphs is well-understood and can be implemented efficiently using reactive message passing \cite{bagaev_reactive_2021}.
Effectively, this means that the computational complexity of Expected Free Energy minimization is the same as the computational complexity of variational inference on a factor graph.

\subsection{Inferring a policy posterior}
\autoref{cor:bethe} introduces a circular dependency in the model definition: to define the VFE functional with epistemic priors \eqref{eq:epistemic-priors-bethe}, we need access to the variational posterior distribution, but the variational posterior can only be obtained by minimizing the VFE functional given the generative model.

This circular dependency can be resolved through an iterative variational inference procedure implemented as message passing on the factor graph. We first initialize the variational posterior and then iteratively update both the posterior beliefs and epistemic priors until convergence.

On a factor graph, we can implement variational inference using message passing algorithms that iteratively updates posterior distributions \cite{pearl_reverend_1982}.

Each message passing iteration $\tau$ refines both the posteriors and priors simultaneously.
To that extent, let $q_\tau(\cdot)$ be the variational posterior distribution at iteration $\tau$, we then define the epistemic priors as
\begin{equation}
    \begin{aligned}
        \tilde{p}_\tau(u_t) & = \sigma\big(H[q_{\tau-1}(x_t, x_{t-1} | u_t)] - H[q_{\tau-1}(x_{t-1} | u_t)]\big) \\
        \tilde{p}_\tau(x_t) & = \sigma\big(-H[q_{\tau-1}(y_t | x_t)]\big)\,.
    \end{aligned}
\end{equation}
Here, $\sigma$ is the softmax function, which guarantees proportionality as in Equations \ref{eq:epistemic-prior-u-bethe} and \ref{eq:epistemic-prior-x-bethe}.
A formal description of the algorithm is given in \autoref{alg:efevfe}.
While this approach solves the initialization problem, there are some subtleties that need to be addressed. Specifically, although the subtraction of entropies in line \autoref{eq:Cu-after-entropies-substitution} results in a constant when using the same variational distribution $q$ for both the epistemic prior $\tilde{p}$ and the optimization, this property no longer holds when we use different distributions - namely, when we use $q_{\tau-1}$ to define $\tilde{p}_\tau$ but optimize with respect to $q_\tau$.
While this is not a problem if the inference procedure converges, this convergence is not guaranteed.
\begin{algorithm}
    \caption{EFE minimization as VFE minimization}
    \begin{algorithmic}[1]
        \State \textbf{Input:} Factorized generative model $p(\bm{y}, \bm{x}, \bm{u})$, preference prior $\hat{p}(\bm{x})$, number of VI iterations $\tau_{max}$
        \State \textbf{Output:} Policy posterior $q_{\tau_{max}}(\bm{u})$
        \State $q_0(\bm{y}, \bm{x}, \bm{u}) \gets $ Uninformative distribution
        \For{$\tau \gets 1$ to $\tau_{max}$} \Comment{Iterations of variational inference algorithm}
        \For{each time step $t$}
        \State $\tilde{p}_\tau(u_t) \gets \sigma(H[q_{\tau-1}(x_t, x_{t-1} | u_t)] - H[q_{\tau-1}(x_{t-1} | u_t)])$
        \State $\tilde{p}_\tau(x_t) \gets \sigma(-H[q_{\tau-1}(y_t | x_t)])$
        \EndFor
        \State $q_\tau(\bm{y}, \bm{x}, \bm{u}) \gets \mathrm{infer}(p(\bm{y},\bm{x},\bm{u}))$ \Comment{Message passing \eqref{eq:ffg:messagepassing}}
        \EndFor
        \State \Return $q_{\tau_{max}}(\bm{u})$
    \end{algorithmic}
    \label{alg:efevfe}
\end{algorithm}
\ml{I see here two ways to write comment    s and this is strange. You either always use triangle or everytime \{Do x:\}. Also the 3 line is state assigning, it's not a comment, but then it means you need to have an arrow there. The same goes for the line 10, it's also an assigment, so it should be done with an arrow.}
\bdv{You write at top you want to return $q^*(u)$, but at bottom you return $q_\tau(u)$. Also, in line 10 I would write updating the marginals $q_\tau(u)$, $q_\tau(x)$ etc on the edges rather than the joint $q_\tau(y,x,u)$.}
\woutern{I am a bit afraid that talking about edge-marginals introduces ambiguity in two ways: (1) we have to talk about that $q_\tau(x_t)$ gets updated for all $t$ (which can get messy), and (2) lines 7 and 8 explicitly mention node-local joint marginals, which I am afraid is not clear that they will be updated if we only talk about variable marginals.}
\tvdl{If all computations in the algorithm are local, it can be visualized as message passing on the factor graph. Then the procedure can be formulated in terms of local messages around an epistemic node, which makes the implementation modular and scalable (in contrast to an algorithmic presentation which introduces assumptions that are not strictly required). See e.g. chance-constrained active inference, https://arxiv.org/pdf/2102.08792}
\tvdl{The paper is on message passing, but I see no messages anywhere.}
\section{Evaluation}\label{sec:evaluation}


This section evaluates our EFE-minimizing policy inference method.
In this section, we will evaluate the performance of the proposed method. The addition of preference priors is consistent with the literature on KL control \cite{todorov_linearlysolvable_2006,todorov_general_2008}, which means the main point of interest is the influence of the epistemic priors on the policy posterior.
To this extent, we will execute the experiments both with and without the epistemic priors, which will correspond to a KL-control and an EFE-minimizing policy, respectively. \tvdl{There is probably a Friston paper you can refer to here.} \woutern{Why? I'm not implementing a friston paper here, this is our method.}
KL-control is known to be prone to optimistic planning in the face of stochasticity and uncertainty \cite{lazaro-gredilla_what_2024,levine_reinforcement_2018}, so we will explore partially observable Markov decision processes (POMDPs) with stochastic dynamics and observation noise.

For our experimental evaluation, we consider scenarios where the environment dynamics are completely known to the agent, though they may be stochastic or contain inherent uncertainty. This known-dynamics assumption allows us to isolate and evaluate the specific effects of epistemic priors on decision-making, without conflating them with model learning. 

\subsection{Experimental design}

We designed a stochastic grid environment that specifically challenges agents with uncertainty in dynamics and observations. Additionally, we evaluate our method on the Minigrid door-key environment \cite{chevalier-boisvert_minigrid_2023}, which tests how agents handle partial observability. Both environments highlight the differences between KL-control and EFE-minimizing policies in the presence of epistemic uncertainty.

\subsubsection{Stochastic Grid Environment}
For our first experiment, we focus on a stochastic grid environment. In this environment, the agent has to traverse the grid from one end to the other, with hazards and stochastic transitions. The key challenge is that on the shortest path from the start to the goal, there are cells in which the transition matrix is stochastic, with the risk that the agent will end up in a sink state.
\begin{figure}[btp]
    \centering

    \resizebox{0.6\textwidth}{!}{\input{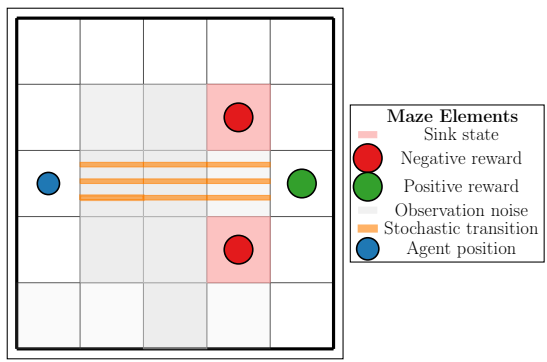}}

    \caption{The stochastic grid environment. The agent should traverse the grid with both stochastic transitions and observation noise. Cells with stochastic transitions appear on the shortest path, creating a risk-reward tradeoff. Opacity for observation noise is used to indicate the uncertainty in the environment.}
    \label{fig:stochastic_maze}
\end{figure}
The stochasticity presents a direct test of how agents handle uncertainty in dynamics: the KL-control agent is expected to plan optimistically through these uncertain transitions, while the EFE-minimizing agent should recognize the epistemic risk and avoid these cells. This environment also features observation noise, adding another layer of uncertainty that forces the agent to maintain beliefs over possible states rather than having full observability.

A longer but safer path exists that avoids all stochastic transitions. The optimal policy for a risk-aware agent would be to take this safer path, despite it requiring more steps. A visualization of the environment is shown in \autoref{fig:stochastic_maze}.

The agent receives a reward of $1$ for reaching the goal. When ending up in a sink state, the agent receives a penalty of $-1$. The full specification of the generative model can be found in \refappx{appx:gridworld-generative-model}.

\subsubsection{Minigrid Door-Key Environment}
\begin{wrapfigure}{R}{0.30\linewidth}
    \centering
    \includegraphics[width=\linewidth]{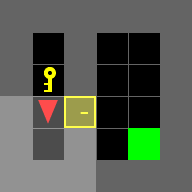}
    \caption{An initial state of the Minigrid environment. The agent has a limited field of view, indicated by the highlighted cells.}
    \label{fig:minigrid_initial}
\end{wrapfigure}
The second environment we consider is a Minigrid environment, specifically a 4x4 door-key environment. This environment tests a different aspect of epistemic uncertainty, namely, partial observability. The agent has a limited field of view, which means that the agent must actively explore to reduce uncertainty about the environment state.

The task requires the agent to locate and pick up a key, find and open a door, and finally reach the goal square. This multi-step process creates a natural exploration challenge that tests how agents handle partial observability. The agent location, key location, and door location are randomized in each episode, which means that the agent has epistemic uncertainty about the environment state.

The EFE-minimizing agent should show more directed exploration behavior, actively seeking to reduce uncertainty about the key and door locations. In contrast, the KL-control agent (without epistemic priors) might exhibit less efficient exploration patterns, as it lacks the intrinsic drive to resolve uncertainty.

The Minigrid environment adds another layer of complexity to the task, as the field of view means that the observations are relative to the agent, while the goals are formulated in an external frame of reference. This means that the observation space of the agent is much larger than the state space. The observation space is of size $\approx5^{49}$ \ml{Saying that smt is growing as a constant does not make much of a sense, you want to say probably that size of the env is approximatly such big.}, which makes algorithms like Sophisticated Inference \cite{friston_sophisticated_2021} intractable. Furthermore, the planning horizon of 22 timesteps makes standard Expected Free Energy computation as policy evaluation intractable. The computational complexity of the door-key environment is where the benefits of the proposed method are most evident.

A visualization of the initial state of the Minigrid environment is shown in \autoref{fig:minigrid_initial}. The agent receives a reward when reaching the goal, proportional to the number of steps taken. The full specification of the used generative model can be found in \refappx{appx:minigrid-generative-model}. The source code and implementation details for all experiments presented in this paper are publicly available in our online repository\footnote{\ifanonymous\url{https://anonymous.4open.science/r/EFEasVFE-DA7F}\else\url{https://github.com/biaslab/EFEasVFE}\fi}.

\subsection{Results}

\subsubsection{Stochastic Grid Environment} \label{sec:stochastic-grid-environment-results}

We evaluated the performance of both agents across 100 episodes, \autoref{tab:combined-results}, left, summarizes the quantitative results.

This table suggests distincly different navigational patterns between both agents.
The EFE-minimizing agent consistently chooses the longer but safer path around the stochastic transition cells, demonstrating risk-averse behavior that aligns with theoretical predictions. In contrast, the KL-control agent attempts the shorter path through cells with stochastic transitions, exhibiting the optimistic planning tendency typical of approaches that wrongly account for the system's aleatoric uncertainty.
A more detailed visualization of the trajectories for both agents, as well as an empirical convergence analysis of our algorithm, is provided in \refappx{appx:additional_results_gridworld}.

\subsubsection{Minigrid Door-Key Environment}

We evaluated both agents across 200 experimental episodes with a planning horizon of 25 steps. \autoref{tab:combined-results}, right, presents the quantitative comparison between the EFE-minimizing and KL-control agents in the Minigrid door-key environment.
\begin{table}[bt]

    \centering
    \begin{tabular}{l|cc|l|cc}
        \multicolumn{3}{c|}{Stochastic Grid} & \multicolumn{3}{c}{Minigrid Door-Key}                                                                                                              \\
        \hline
        Metric                               & KL                                    & EFE (ours)         & Metric         & KL                           & EFE (ours)                            \\
        \hline
        Success Rate                         & 21\%                                  & \textbf{100\%}     & Success Rate   & 85.0\%                       & \textbf{95.0\%}                       \\
        Avg. Reward                          & 0.22 ± 0.77                           & \textbf{1.00 ± 0}  & Avg. Reward    & 0.82 ± 0.35                  & \textbf{0.92 ± 0.21}                  \\
        \multirow{2}{*}{-}                   & \multirow{2}{*}{-}                    & \multirow{2}{*}{-} & Avg. Time to   & \multirow{2}{*}{2.0 ± 3.65} & \multirow{2}{*}{\textbf{1.28 ± 0.64}} \\
                                             &                                       &                    & Key Visibility &                              &                                       \\
    \end{tabular}
    \caption{Performance comparison across environments (100 episodes for Stochastic Grid, 200 episodes for Minigrid).}
    \label{tab:combined-results}

\end{table}

The EFE-minimizing agent demonstrates more effective exploration patterns, particularly in scenarios requiring active information seeking. This is especially evident in the reduced time needed to locate the key, confirming that epistemic priors enable more directed information-seeking in partially observable environments.

A more detailed visualization of the trajectories for both agents and an empirical convergence analysis of our algorithm is provided in \refappx{appx:additional_results_minigrid}.

\section{Discussion}\label{sec:discussion}

Our experimental results demonstrate that agents using the proposed message passing approach for EFE minimization exhibit the characteristic behaviors of active inference: risk-averse path selection in stochastic environments and information-seeking exploration in partially observable settings. These behaviors emerge naturally from the inclusion of epistemic priors in the variational free energy objective, without requiring explicit computation of expected free energy.

The reformulation of EFE minimization as a variational inference problem provides several advantages: it maintains theoretical consistency with the Free Energy Principle's core tenet; transforms a combinatorial search problem into a tractable inference procedure using message passing on factor graphs; and eliminates the need for ad hoc policy pruning, replacing it with principled reactive processing where the agent minimizes VFE at each point in time. This approach is particularly valuable in complex environments where traditional EFE computation becomes intractable, as demonstrated in our Minigrid experiments.

While our implementation shows promising results, the convergence properties of our iterative approach to handling self-referential epistemic priors require further theoretical investigation. Future research should investigate the inclusion of additional parameters in the generative model, particularly those related to environment dynamics. A natural extension of our work would be to incorporate parameter learning within the epistemic priors. This would allow agents to infer policies that facilitate sample-efficient learning of model parameters. This concept has already been introduced in \cite{devries_expected_2025}. However, the exact functional form of the empirical prior has not yet been derived.
\section{Conclusion}\label{sec:conclusion}

In this paper, we presented a message passing implementation of Expected Free Energy minimization on factor graphs. Our approach reframes EFE minimization as a variational inference problem, allowing us to use standard message passing algorithms for efficient policy inference. The key insight is that by introducing appropriate epistemic priors, we can transform the expected free energy objective into a modified variational free energy objective that can be optimized through standard inference techniques.

Our experimental results in both stochastic and partially observable environments demonstrate that this approach reproduces the characteristic behaviors of active inference: risk aversion in environments with hazardous stochasticity and information seeking in partially observable environments. The message passing implementation shows significant advantages in computational efficiency compared to traditional methods for computing expected free energy, particularly in complex environments with high-dimensional observation spaces and long planning horizons.

By reformulating EFE minimization as variational inference, our work contributes to unifying the theoretical frameworks of the Free Energy Principle and active inference with practical implementations for decision-making under uncertainty. This bridges the gap between theoretical accounts of intelligent behavior and efficient algorithms for artificial agents, offering a principled approach to balancing pragmatic and epistemic objectives in complex and uncertain environments.

\tvdl{Note to self: read until here (May 7).}
\ifanonymous
\else

    \section*{Acknowledgements}
    This publication is part of the project "ROBUST: Trustworthy AI-based Systems for Sustainable Growth" with project number KICH3.LTP.20.006, which is (partly) financed by the Dutch Research Council (NWO), GN Hearing, and the Dutch Ministry of Economic Affairs and Climate Policy (EZK) under the program LTP KIC 2020-2023.
\fi

\bibliographystyle{splncs04}
\bibliography{wouter}
\appendix
\section{Proof of \autoref{cor:bethe}} \label{sec:bethe-proof}
\begin{proof} Proof of \autoref{cor:bethe}. This proof is an adjusted proof of the proof of \autoref{the:efetheorem}, which is given in \cite{devries_expected_2025}.
    \begin{subequations}
        \begin{align}
            F[q] & = E_{q(\bm{y}, \bm{x}, \bm{u})}\bigg[ \log \frac{q(\bm{y}, \bm{x}, \bm{u})}{p(\bm{y}, \bm{x}, \bm{u})  \hat{p}(\bm{x}) \tilde{p}(\bm{u}) \tilde{p}(\bm{x})  } \bigg]                                               \\
                 & = E_{q(\bm{u})}\bigg[ \log \frac{q(\bm{u})}{p(\bm{u})}
                + \underbrace{E_{q(\bm{y}, \bm{x} | \bm{u})}\big[ \log \frac{q(\bm{y}, \bm{x} | \bm{u})}{p(\bm{y}, \bm{x}|\bm{u})  \hat{p}(\bm{x}) \tilde{p}(\bm{u}) \tilde{p}(\bm{x})  }\big]}_{C(\bm{u})}
            \bigg] \label{eq:F-C-1}                                                                                                                                                                                                   \\
                 & = E_{q(\bm{u})}\bigg[ \log \frac{q(\bm{u})}{p(\bm{u})}
            + \underbrace{G(\bm{u}) +E_{q(\bm{y}, \bm{x} | \bm{u})} \big[\log \frac{q(\bm{y}, \bm{x}|\bm{u})}{p(\bm{y}, \bm{x}|\bm{u})}\big] + constant}_{C(\bm{u}) \text{ if \eqref{eq:epistemic-priors-bethe} holds}}
            \bigg] \label{eq:F-C-2}                                                                                                                                                                                                   \\
                 & = E_{q(\bm{u})}\big[ G(\bm{u})\big]+ E_{q(\bm{y}, \bm{x}, \bm{u})}\bigg[\log \frac{q(\bm{y}, \bm{x}, \bm{u})}{p(\bm{y}, \bm{x}, \bm{u})}\bigg] + constant \quad \text{if \eqref{eq:epistemic-priors-bethe} holds }
        \end{align}
    \end{subequations}
\end{proof}

In the above derivation, we still need to prove the transition for $C(\bm{u}) $ from
\eqref{eq:F-C-1} to \eqref{eq:F-C-2}, which we address next.

\begin{lemma}[Proof of equivalence $C(\bm{u})$ in \eqref{eq:F-C-1} and \eqref{eq:F-C-2}]
    \begin{subequations}\label{eq:proof-Cu}
        \begin{align}
            C( & \bm{u}) = \mathbb{E}_{q(\bm{y}, \bm{x}|\bm{u})}\bigg[ \log \frac{ \overbrace{q(\bm{y}, \bm{x}|\bm{u})}^{\text{posterior}} }{ \underbrace{p(\bm{y}, \bm{x}|\bm{u})}_{\text{predictive}} \underbrace{\hat{p}(\bm{x})}_{\text{utility}} \underbrace{\tilde{p}(\bm{u}) \tilde{p}(\bm{x}) }_{\text{epistemic priors}}} \bigg] \label{eq:Cu-first-line}                                     \\
               & = \underbrace{ \mathbb{E}_{q(\bm{y}, \bm{x}|\bm{u})}\bigg[\log\bigg( \underbrace{\frac{q(\bm{x}|\bm{u})}{\hat{p}(\bm{x})}}_{\text{risk}}\cdot \underbrace{\frac{1}{q(\bm{y}|\bm{x})}}_{\text{ambiguity}} \bigg) \bigg] }_{G(\bm{u}) = \text{Expected Free Energy}} +   \label{eq:Cu-eqC}                                                                                              \\
               & \quad + \mathbb{E}_{q(\bm{y}, \bm{x}|\bm{u})}\bigg[ \log\bigg( \underbrace{\frac{\hat{p}(\bm{x}) q(\bm{y}|\bm{x})}{q(\bm{x}|\bm{u})}}_{\text{inverse factors from }G(\bm{u})} \cdot \underbrace{\frac{q(\bm{y}, \bm{x}|\bm{u})}{p(\bm{y}, \bm{x}|\bm{u}) \hat{p}(\bm{x}) \tilde{p}(\bm{u}) \tilde{p}(\bm{x})    }}_{\text{factors from }\eqref{eq:Cu-first-line}} \bigg)\bigg] \notag \\
               & = G(\bm{u}) + \underbrace{\mathbb{E}_{q(\bm{y}, \bm{x}|\bm{u})}\bigg[ \log \frac{q(\bm{y}, \bm{x}|\bm{u})}{p(\bm{y}, \bm{x}|\bm{u})}\bigg]}_{=B(\bm{u})} + \underbrace{\mathbb{E}_{q(\bm{y}, \bm{x}|\bm{u})}\bigg[ \log  \frac{q(\bm{y}|\bm{x})}{q(\bm{x}|\bm{u}) \tilde{p}(\bm{u}) \tilde{p}(\bm{x}) } \bigg]}_{\text{choose epistemic priors to let this vanish}}                   \\
               & = G(\bm{u}) + B(\bm{u}) + \mathbb{E}_{q(\bm{x}|\bm{u})}\bigg[\log \frac{1}{q(\bm{x}|\bm{u}) \tilde{p}(\bm{u})} \bigg] + \mathbb{E}_{q(\bm{y} ,\bm{x} |\bm{u})}\bigg[ \log  \frac{q(\bm{y}|\bm{x})}{\tilde{p}(\bm{x})} \bigg] \notag
        \end{align}
    \end{subequations}
    Now here we can replace the general $q(\bm{y}|\bm{x})$ and $ q(\bm{x}|\bm{u})$ with the factorised $\prod_t q(y_t|x_t)$ and $\prod_t q(x_t|x_{t-1}, u_t)$.
    \begin{subequations}
        \begin{align}
            C( & \bm{u}) = G(\bm{u}) + B(\bm{u}) + \mathbb{E}_{q(\bm{x}|\bm{u})}\bigg[\log \frac{1}{\prod_t q(x_t|x_{t-1}, u_t) \tilde{p}(u_t)} \bigg] + \notag                                \\
               & \quad + \mathbb{E}_{q(\bm{x}|\bm{u})}\bigg[\mathbb{E}_{q(\bm{y}|\bm{x})}\bigg[ \log q(y_t|x_t) - \log \tilde{p}(x_t) \bigg] \bigg]                                            \\
               & = G(\bm{u}) + B(\bm{u})  \notag                                                                                                                                               \\
               & \quad + \sum_{t=1}^{T} \mathbb{E}_{q(x_t, x_{t-1} | u_t)}\bigg[- \log q(x_t|x_{t-1}, u_t)  - \log \tilde{p}(u_t) \bigg] + \notag                                              \\
               & \quad + \sum_{t=1}^{T} \mathbb{E}_{q(x_t, | u_t)}\bigg[ \mathbb{E}_{q(y_t|x_t)}\bigg[ \log   q(y_t|x_t) - \log \tilde{p}(x_t) \bigg] \bigg]\,. \label{eq:Cu-before-entropies}
        \end{align}
    \end{subequations}
    Now we can recognize the following:

    \begin{subequations}
        \begin{align}
             & \mathbb{E}_{q(x_t, x_{t-1} | u_t)}\bigg[- \log q(x_t|x_{t-1}, u_t) \bigg]                                                \\
             & \qquad = \mathbb{E}_{q(x_t, x_{t-1} | u_t)}\bigg[- \bigg(\log q(x_t, x_{t-1} | u_t) - \log q(x_{t-1} | u_t)\bigg) \bigg] \\
             & \qquad = H[q(x_t, x_{t-1} | u_t)] - H[q(x_{t-1} | u_t)] \,,
        \end{align}
    \end{subequations}
    and
    \begin{align}
        \mathbb{E}_{q(y_t|x_t)}\big[ \log   q(y_t|x_t) \big] = -H[q(y_t|x_t)] \,.
    \end{align}
    Which, when substituted into \eqref{eq:Cu-before-entropies}, together with the definitions of $\tilde{p}(u_t)$ and $\tilde{p}(x_t)$, yields
    \begin{subequations}
        \begin{align}
             & = G(\bm{u}) + B(\bm{u})  \notag                                                                                                                                                                                                   \\
             & \quad +  \sum_{t=1}^{T}  \underbrace{H[q(x_t, x_{t-1} | u_t)] - H[q(x_{t-1} | u_t)]  - \log \tilde{p}(u_t)}_{=c_x \text{ if } \tilde{p}(u_t) \propto \exp(H[q(x_t, x_{t-1} | u_t)] - H[q(x_{t-1} | u_t)])}  + \notag              \\
             & \quad +  \sum_{t=1}^{T} \mathbb{E}_{q(x_t, | u_t)}\bigg[ \underbrace{-H[q(y_t | x_t)] - \log \tilde{p}(x_t)}_{=c_y \text{ if } \tilde{p}(x_t) \propto \exp(-H[q(y_t | x_t)])} \bigg]   \label{eq:Cu-after-entropies-substitution} \\
             & = G(\bm{u}) + \mathbb{E}_{q(\bm{y}, \bm{x}|\bm{u})}\bigg[ \log \frac{q(\bm{y}, \bm{x}|\bm{u})}{p(\bm{y}, \bm{x}|\bm{u})}\bigg] + c_x + c_y \,, \quad \text{if \eqref{eq:epistemic-priors-bethe} holds.}
        \end{align}
    \end{subequations}
\end{lemma}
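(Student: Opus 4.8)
The plan is to prove the identity by a chain of purely algebraic rewrites of the integrand of the logarithm in $C(\bm{u})$ from \eqref{eq:Cu-first-line}, extracting first the expected free energy $G(\bm{u})$, then the predictive relative-entropy term $B(\bm{u}) := \mathbb{E}_{q(\bm{y},\bm{x}|\bm{u})}\!\big[\log\tfrac{q(\bm{y},\bm{x}|\bm{u})}{p(\bm{y},\bm{x}|\bm{u})}\big]$, and finally showing that what remains collapses to a $\bm{u}$-independent constant precisely when the epistemic priors take the normalized form prescribed by \eqref{eq:epistemic-priors-bethe}. No probabilistic inequalities are involved: every line is an identity between expectations, so the work is bookkeeping rather than estimation.

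First I would split $\log\frac{q(\bm{y},\bm{x}|\bm{u})}{p(\bm{y},\bm{x}|\bm{u})\,\hat{p}(\bm{x})\,\tilde{p}(\bm{u})\,\tilde{p}(\bm{x})}$ as $\log\!\big(\tfrac{q(\bm{x}|\bm{u})}{\hat{p}(\bm{x})}\cdot\tfrac{1}{q(\bm{y}|\bm{x})}\big)$ plus a remainder. The expectation of the first piece is by definition $G(\bm{u})$, and inside the remainder the utility prior $\hat{p}(\bm{x})$ cancels, leaving $\mathbb{E}_{q(\bm{y},\bm{x}|\bm{u})}\!\big[\log\tfrac{q(\bm{y},\bm{x}|\bm{u})\,q(\bm{y}|\bm{x})}{p(\bm{y},\bm{x}|\bm{u})\,q(\bm{x}|\bm{u})\,\tilde{p}(\bm{u})\,\tilde{p}(\bm{x})}\big]$. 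Splitting this once more peels off $B(\bm{u})$ and isolates the epistemic remainder $R(\bm{u}) := \mathbb{E}_{q(\bm{y},\bm{x}|\bm{u})}\!\big[\log\tfrac{q(\bm{y}|\bm{x})}{q(\bm{x}|\bm{u})\,\tilde{p}(\bm{u})\,\tilde{p}(\bm{x})}\big]$, so that $C(\bm{u}) = G(\bm{u}) + B(\bm{u}) + R(\bm{u})$ holds for \emph{any} choice of priors; all that is left is to evaluate $R(\bm{u})$.

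For $R(\bm{u})$ I would invoke the structured posterior factorization $q(\bm{x}|\bm{u}) = \prod_t q(x_t|x_{t-1},u_t)$, $q(\bm{y}|\bm{x}) = \prod_t q(y_t|x_t)$ from \eqref{eq:factorized-posterior} together with $\tilde{p}(\bm{u}) = \prod_t \tilde{p}(u_t)$ and $\tilde{p}(\bm{x}) = \prod_t \tilde{p}(x_t)$, which turns $R(\bm{u})$ into a sum of per-timeslice contributions. For each transition term I would use the chain rule $q(x_t|x_{t-1},u_t) = q(x_t,x_{t-1}|u_t)/q(x_{t-1}|u_t)$ to obtain $\mathbb{E}_{q(x_t,x_{t-1}|u_t)}\!\big[-\log q(x_t|x_{t-1},u_t)\big] = H[q(x_t,x_{t-1}|u_t)] - H[q(x_{t-1}|u_t)]$, and for each observation term the elementary identity $\mathbb{E}_{q(y_t|x_t)}\!\big[\log q(y_t|x_t)\big] = -H[q(y_t|x_t)]$. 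Substituting the normalized priors $\tilde{p}(u_t) = Z_{u_t}^{-1}\exp\!\big(H[q(x_t,x_{t-1}|u_t)] - H[q(x_{t-1}|u_t)]\big)$ and $\tilde{p}(x_t) = Z_{x_t}^{-1}\exp\!\big(-H[q(y_t|x_t)]\big)$ makes the entropy-valued parts telescope term by term, leaving $R(\bm{u}) = \sum_t \log Z_{u_t} + \sum_t \log Z_{x_t} =: c_x + c_y$, which no longer depends on $\bm{u}$ (nor on $\bm{x}$ or $\bm{y}$). Combining with the previous step yields $C(\bm{u}) = G(\bm{u}) + B(\bm{u}) + c_x + c_y$, as claimed.

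The fussiest step, I expect, is the expectation bookkeeping in the paragraph above: one must check that $\mathbb{E}_{q(\bm{y},\bm{x}|\bm{u})}$ of a summand depending only on $(x_t,x_{t-1})$ genuinely collapses to an expectation under the local pair-marginal $q(x_t,x_{t-1}|u_t)$ implied by \eqref{eq:factorized-posterior}, and that $H[q(x_{t-1}|u_t)]$ is the entropy of the right marginal; a careless treatment here would silently change a constant into something $\bm{u}$-dependent. A conceptual caveat worth stating explicitly --- not a gap in the proof --- is that the ``constants'' $Z_{u_t}$ and $Z_{x_t}$ are functionals of the very $q$ used to define the priors, so the cancellation is exact only when one fixed $q$ plays both roles at once; this is why the decomposition should be read as holding at a consistent $(\tilde{p},q)$ pair rather than along the iterates of \autoref{alg:efevfe}.
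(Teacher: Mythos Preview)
Your proposal is correct and follows essentially the same route as the paper's own derivation: extract $G(\bm{u})$ by multiplying and dividing by $\tfrac{q(\bm{x}|\bm{u})}{\hat{p}(\bm{x})\,q(\bm{y}|\bm{x})}$, peel off $B(\bm{u})$, factorize the residual $R(\bm{u})$ across timeslices, rewrite the transition and emission pieces as entropy differences via the chain rule, and then let the epistemic priors absorb those entropies into normalization constants. Your explicit bookkeeping of the normalizers $Z_{u_t},Z_{x_t}$ and your closing caveat that the cancellation is exact only at a self-consistent $(\tilde{p},q)$ pair are, if anything, slightly sharper than the paper's presentation, which leaves those points more implicit.
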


\section{Generative Model for the Gridworld Environment} \label{appx:gridworld-generative-model}

The generative model for the stochastic grid environment is defined as follows:
\begin{subequations}
    \begin{align}
    x_0 & \sim p(x_0)                              \\
    x_t & \sim \text{Cat}(x_t | x_{t-1}, u_{t}, B) \\
    y_t & \sim \text{Cat}(y_t | x_t, A)            \\
    x_T & \sim \hat{p}(x_T) \,.
\end{align}
\end{subequations}

Here, $s_t$ represents the agent's state at time $t$, $y_t$ is the observation, and $u_t$ is the action. The transition dynamics are governed by $B$, and $A$ represents the observation model. The agent starts with a prior belief $p(s_0)$ and aims to reach the goal state by the end of the planning horizon $T$.

In the case of the KL-control agent, the prior on the control is given by
\begin{equation}
    u_t \sim \text{Cat}(u_t | \mathbf{1}/4) \quad \text{for } t = 1,\ldots,T \,.
\end{equation}
The EFE-minimizing agent uses empirical priors on the control and the states, given by

\begin{subequations}
\begin{align}
    u_t & \sim \text{Cat}(u_t | \sigma(H[q(x_t, x_{t-1} | u_t)] - H[q(x_{t-1} | u_t)])) \quad &\text{for } t = 1,\ldots,T \\
    x_t & \sim \text{Cat}(x_t | \sigma(-H[q(y_t | x_t)])) \quad &\text{for } t = 1,\ldots,T \,.
\end{align}
\end{subequations}

\section{Generative Model for the Minigrid Environment} \label{appx:minigrid-generative-model}

The generative model for the Minigrid environment uses a factorized state and observation space, which makes the model computationally tractable.
Here, the location of the agent is denoted by $l$, the orientation by $o$, the key-door state by $s$, the door location by $d$, and the key location by $k$.
The key-door state is a categorical variable with three possible values: $\{0, 1, 2\}$, where $0$ indicates that the key is not picked up yet, $1$ indicates that the key is picked up but the door is not opened yet, and $2$ indicates that the key is picked up and the door is opened.
For the observations, $y_{t,(x,y)}$ is the observation at time $t$ for cell $(x,y)$ of the field of view.
The generative model for the Minigrid environment is defined as follows:
\begin{subequations}
\begin{align}
    l_0         & \sim p(l_0)                                                                                            \\
    o_0         & \sim p(o_0)                                                                                            \\
    s_0         & \sim p(s_0)                                                                                            \\
    d           & \sim p(d)                                                                                              \\
    k           & \sim p(k)                                                                                              \\
    l_t         & \sim \text{Cat}(l_t | l_{t-1}, o_{t-1}, k, d, s_{t-1}, u_{t}, B^l)                                     \\
    o_t         & \sim \text{Cat}(o_t | o_{t-1}, B^o, u_t)                                                               \\
    s_t         & \sim \text{Cat}(s_t | s_{t-1}, l_{t-1}, o_{t-1}, k, d, u_{t-1}, B^s)                                   \\
    y_{t,(x,y)} & \sim \text{Cat}(y_{t,(x,y)} | l_t, o_t, k, d, s_t, A_{(x,y)}) \quad \forall (x,y) \in \{1,\ldots,7\}^2 
\end{align}
\end{subequations}

with terminal state goal priors

\begin{subequations}

\begin{align}
    l_T & \sim \hat{p}(l_T)                               \\
    s_T & \sim \text{Cat}(s_T | [0, 0, 1]) = \hat{p}(s_T) \,,
\end{align}
\end{subequations}
where $B^l$ is the location transition tensor, $B^o$ is the orientation transition tensor, $B^s$ is the key-door state transition tensor, and $A_{(x,y)}$ are the observation tensors for each cell in the field of view.

In the case of the KL-control agent, the prior on the control is given by
\begin{equation}
    u_t \sim \text{Cat}(u_t | \mathbf{1}/5) \quad \text{for } t = 1,\ldots,T \,.
\end{equation}
The EFE-minimizing agent uses empirical priors on the control and the states, given by
\begin{subequations}
\begin{align}
    u_t & \sim \text{Cat}(u_t | \sigma(                                                                                                                &        \\
        & \qquad H[q(l_t, l_{t-1}, o_{t-1}, k, d, s_{t-1} | u_t)] - H[q(l_{t-1}, o_{t-1}, k, d, s_{t-1} | u_t)]                                     +  &        \\
        & \qquad H[q(s_t, s_{t-1}, l_t, o_{t-1}, k, d | u_t)] - H[q(s_{t-1}, l_t, o_{t-1}, k, d | u_t)]                                             +  &        \\
        & \qquad H[q(o_t, o_{t-1} | u_t)] - H[q(o_{t-1} | u_t)])) \quad                                          \text{for } t = 1,\ldots,T            &        \\
    l_t & \sim \text{Cat}\bigg(l_t | \sigma \bigg(\sum_{(x,y)}  -H[q(y_{t,(x,y)}, o_t, s_t, k, d, | l_t)] + H[o_t, k, d, s_t | l_t] \bigg)\bigg)       & \notag \\
        & \quad \text{for } t = 1,\ldots,T                                                                                                             &        \\
    o_t & \sim \text{Cat}\bigg(o_t | \sigma \bigg(\sum_{(x,y)}  -H[q(y_{t,(x,y)}, l_t, s_t, k, d, | o_t)] + H[l_t, s_t, k, d | o_t] \bigg)\bigg)       & \notag \\
        & \quad \text{for } t = 1,\ldots,T                                                                                                             &        \\
    s_t & \sim \text{Cat}\bigg(s_t | \sigma \bigg(\sum_{(x,y)}  -H[q(y_{t,(x,y)}, l_t, o_t, k, d, | s_t)] + H[l_t, o_t, k, d | s_t] \bigg)\bigg)       & \notag \\
        & \quad \text{for } t = 1,\ldots,T                                                                                                             &        \\
    k   & \sim \text{Cat}\bigg(k | \sigma  \bigg(\sum_t \sum_{(x,y) } -H[q(y_{t,(x,y)}, l_t, o_t, s_t, d, | k)] + H[l_t, o_t, s_t, d | k] \bigg)\bigg) & \\
    d   & \sim \text{Cat}\bigg(d | \sigma  \bigg(\sum_t \sum_{(x,y) } -H[q(y_{t,(x,y)}, l_t, o_t, s_t, k, | d)] + H[l_t, o_t, s_t, k | d] \bigg)\bigg)\,. &  
\end{align}

\end{subequations}
\section{Additional Results for the Stochastic Grid Environment Experiments} \label{appx:additional_results_gridworld}
In this section, we will provide further analysis of the results presented in \autoref{sec:stochastic-grid-environment-results}. We will go into more detail on the convergence of the Bethe Free Energy over different iterations of the variational inference procedure, and we will elaborate on a trajectory in a specific episode.
\subsection{Convergence Analysis}
In \autoref{fig:gridworld_inference_results}, we plot the Bethe Free Energy over the iterations of the message passing procedure, along the state of the environment at which the inference procedure is being called.

As we can see, even though we have not provided a proof of convergence, in this specific example, the Bethe Free Energy converges to a constant value, indicating that our inference procedure has converged.

Note that the Bethe Free Energy is an approximation of the true Variational Free Energy, and can therefore not be used for model comparison \cite{yedidia_constructing_2005}. Although RxInfer minimizes the Bethe Free Energy, this explains the upwards trend in the Bethe Free Energy curve, and we can only use the Bethe Free Energy as a sanity check to check convergence of the inference procedure.
\begin{figure}[htbp]
    \centering
    \resizebox{\textwidth}{!}{\input{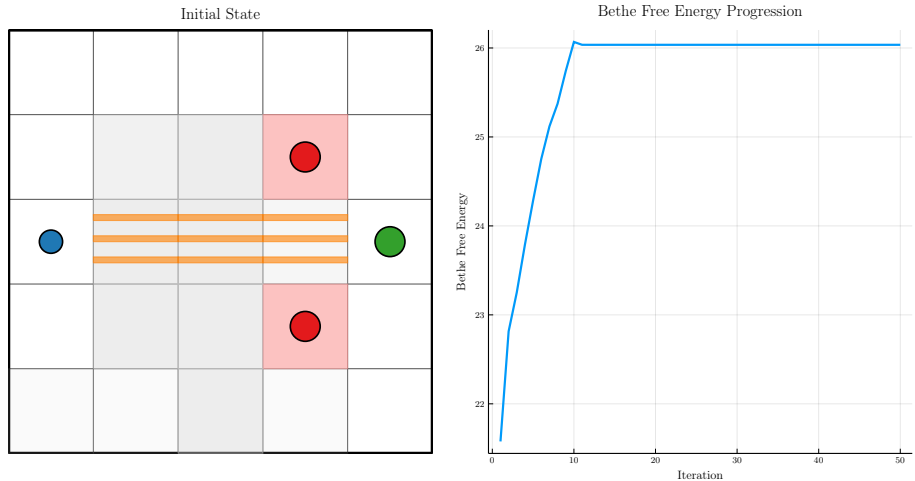}}
    \caption{Visualization of the inference results for the stochastic grid environment. On the left, the initial state of the environment is shown. On the right we show the Bethe Free Energy curve over the iterations of message passing. Convergence to a constant value indicates convergence of the inference procedure.}
    \label{fig:gridworld_inference_results}
\end{figure}

\subsection{Trajectory}

Figure \ref{fig:maze-comparison-full} provides a frame-by-frame comparison of the trajectories taken by the EFE-minimizing agent (left) and the KL-control agent (right) in the stochastic grid environment. This visualization clearly demonstrates the differences in planning strategies between the two approaches.
\begin{figure*}[tbh]
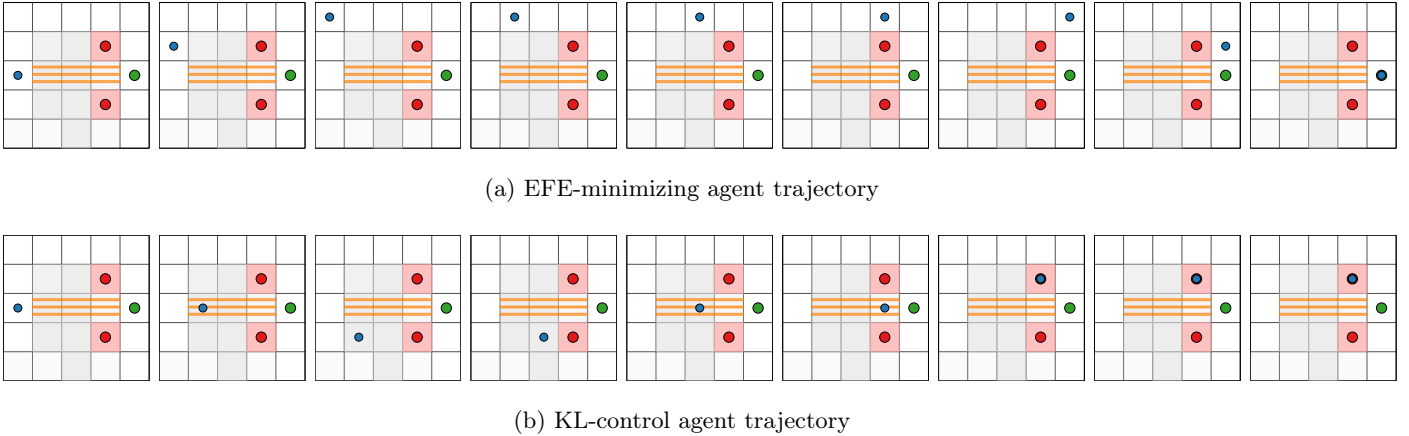

    \centering
    \begin{adjustwidth}{-3cm}{-3cm} 
        \begin{subfigure}{\linewidth}
            \centering
            \setlength{\tabcolsep}{1pt} 
            \begin{tabular}{*{9}{c}}
                \resizebox{0.11\linewidth}{!}{\input{figures/stochastic_maze/efe_stochastic_maze_agent/frames/frame_000.tikz}} &
                \resizebox{0.11\linewidth}{!}{\input{figures/stochastic_maze/efe_stochastic_maze_agent/frames/frame_001.tikz}} &
                \resizebox{0.11\linewidth}{!}{\input{figures/stochastic_maze/efe_stochastic_maze_agent/frames/frame_002.tikz}} &
                \resizebox{0.11\linewidth}{!}{\input{figures/stochastic_maze/efe_stochastic_maze_agent/frames/frame_003.tikz}} &
                \resizebox{0.11\linewidth}{!}{\input{figures/stochastic_maze/efe_stochastic_maze_agent/frames/frame_004.tikz}} &
                \resizebox{0.11\linewidth}{!}{\input{figures/stochastic_maze/efe_stochastic_maze_agent/frames/frame_005.tikz}} &
                \resizebox{0.11\linewidth}{!}{\input{figures/stochastic_maze/efe_stochastic_maze_agent/frames/frame_006.tikz}} &
                \resizebox{0.11\linewidth}{!}{\input{figures/stochastic_maze/efe_stochastic_maze_agent/frames/frame_007.tikz}} &
                \resizebox{0.11\linewidth}{!}{\input{figures/stochastic_maze/efe_stochastic_maze_agent/frames/frame_008.tikz}}   \\
            \end{tabular}
            \caption{EFE-minimizing agent trajectory}
        \end{subfigure}

        \vspace{1em}

        \begin{subfigure}{\linewidth}
            \centering
            \setlength{\tabcolsep}{1pt} 
            \begin{tabular}{*{9}{c}}
                \resizebox{0.11\linewidth}{!}{\input{figures/stochastic_maze/klcontrol_stochastic_maze_agent/frames/frame_000.tikz}} &
                \resizebox{0.11\linewidth}{!}{\input{figures/stochastic_maze/klcontrol_stochastic_maze_agent/frames/frame_001.tikz}} &
                \resizebox{0.11\linewidth}{!}{\input{figures/stochastic_maze/klcontrol_stochastic_maze_agent/frames/frame_002.tikz}} &
                \resizebox{0.11\linewidth}{!}{\input{figures/stochastic_maze/klcontrol_stochastic_maze_agent/frames/frame_003.tikz}} &
                \resizebox{0.11\linewidth}{!}{\input{figures/stochastic_maze/klcontrol_stochastic_maze_agent/frames/frame_004.tikz}} &
                \resizebox{0.11\linewidth}{!}{\input{figures/stochastic_maze/klcontrol_stochastic_maze_agent/frames/frame_005.tikz}} &
                \resizebox{0.11\linewidth}{!}{\input{figures/stochastic_maze/klcontrol_stochastic_maze_agent/frames/frame_006.tikz}} &
                \resizebox{0.11\linewidth}{!}{\input{figures/stochastic_maze/klcontrol_stochastic_maze_agent/frames/frame_007.tikz}} &
                \resizebox{0.11\linewidth}{!}{\input{figures/stochastic_maze/klcontrol_stochastic_maze_agent/frames/frame_008.tikz}}   \\
            \end{tabular}
            \caption{KL-control agent trajectory}
        \end{subfigure}
    \end{adjustwidth}

    \caption{Comparison of agent trajectories in a stochastic maze environment. Top: EFE-minimizing agent with epistemic priors. Bottom: KL-control agent without epistemic priors.}
    \label{fig:maze-comparison-full}
\end{figure*}


The EFE-minimizing agent immediately chooses the longer but safer path, moving upward and around the cells with stochastic transitions. This risk-averse behavior is a direct result of the epistemic priors that penalize uncertainty in transitions. By frame $t=8$, the agent has successfully reached the goal state without encountering any hazardous transitions.

In contrast, the KL-control agent attempts to optimize for the shortest path, moving directly through cells with stochastic transitions. This optimistic planning is characteristic of approaches that don't account for aleatoric uncertainty. While this strategy would be optimal in a deterministic environment, it leads to potential failures in this stochastic setting because the agent cannot manipulate its own luck.

The difference in trajectories directly translates to the performance gap observed across the 100 trial episodes. The EFE-minimizing agent's perfect success rate (100\%) compared to the KL-control agent's lower performance (21\%) confirms the theoretical prediction that incorporating epistemic uncertainty leads to more robust planning in stochastic environments.

\section{Additional Results for the Minigrid Environment Experiments} \label{appx:additional_results_minigrid}

\subsection{Convergence Analysis}
In \autoref{fig:minigrid_inference_results}, we perform inference on the initial state of the Minigrid environment shown in \autoref{fig:appx-minigrid_initial_state}. The figure displays the Bethe Free Energy progression during the inference process, along with the agent's final beliefs about its current location, orientation, and the state of the key and door after the last iteration. We observe that the BFE stabilizes to a constant value, indicating that our inference procedure successfully converges.
\begin{figure}[tbh]
    \centering
    \includegraphics[width=0.25\textwidth]{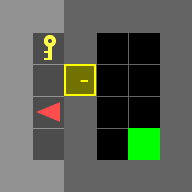}
    \caption{Initial state of the Minigrid environment.}
    \label{fig:appx-minigrid_initial_state}
\end{figure}
\begin{figure}[bht]
    \centering
    \resizebox{\textwidth}{!}{\input{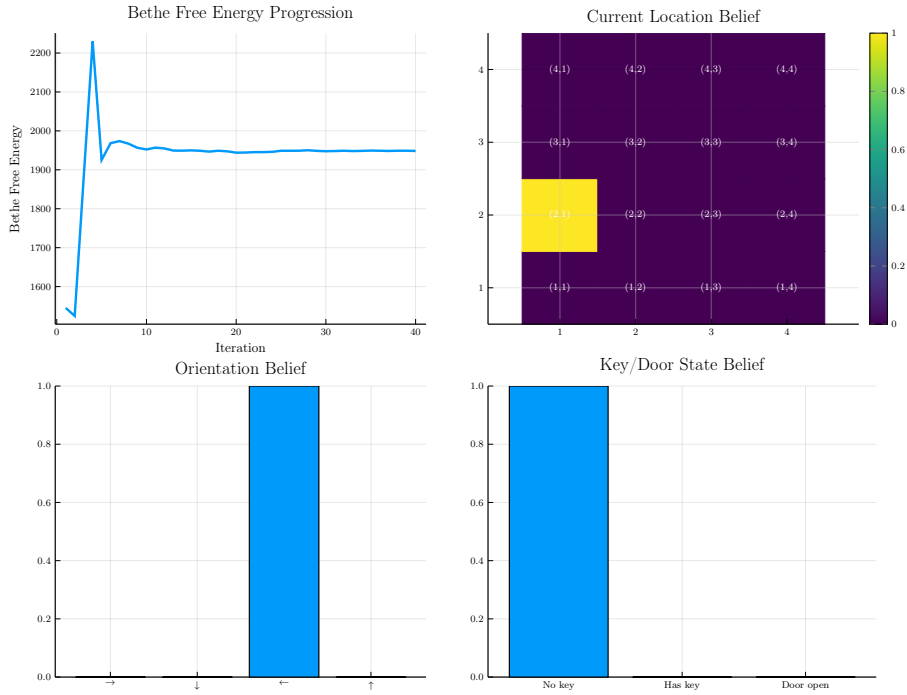}}
    \caption{Visualization of the inference results for the Minigrid environment. Top left: Bethe Free Energy curve over the iterations of message passing. Top right: Agent's belief of its current location after the last iteration. Bottom left: Agent's belief of its current orientation after the last iteration. Bottom right: Agent's belief of the state of the key and door after the last iteration.}
    \label{fig:minigrid_inference_results}
\end{figure}

\subsection{Trajectory}
Figures \ref{fig:efe_trajectory_grid} and \ref{fig:klcontrol_trajectory_grid} provide a frame-by-frame comparison of the trajectories taken by the EFE-minimizing agent and the KL-control agent in the Minigrid environment. This visualization clearly demonstrates the differences in planning strategies between the two approaches, and highlights the shortcomings of the KL-control approach.

The EFE-minimizing agent is able to solve the task at hand, while the KL-control agent stays in the corner of the grid facing the wall. As shown in Figure \ref{fig:efe_trajectory_grid}, the EFE-minimizing agent reaches the goal state, while the KL-control agent does not.

The difference in trajectories directly translates to the performance gap observed across the test episodes. The EFE-minimizing agent's superior performance confirms our theoretical prediction that incorporating epistemic uncertainty leads to more efficient planning in partially observable environments like Minigrid.

\begin{figure}[tbh]
    \centering
    \begin{tabular}{ccccc}
        \includegraphics[width=0.18\textwidth]{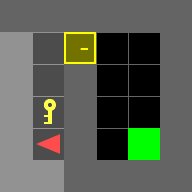} &
        \includegraphics[width=0.18\textwidth]{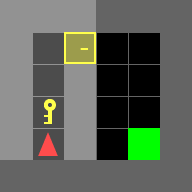} &
        \includegraphics[width=0.18\textwidth]{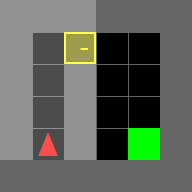} &
        \includegraphics[width=0.18\textwidth]{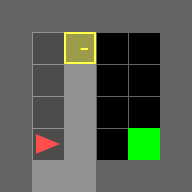} &
        \includegraphics[width=0.18\textwidth]{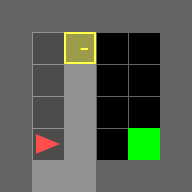}                                                                     \\
        \small{$t=0$}                                                              & \small{$t=1$}  & \small{$t=2$}  & \small{$t=3$}  & \small{$t=4$}  \\

        \includegraphics[width=0.18\textwidth]{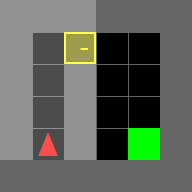} &
        \includegraphics[width=0.18\textwidth]{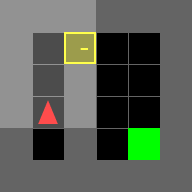} &
        \includegraphics[width=0.18\textwidth]{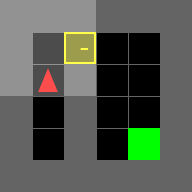} &
        \includegraphics[width=0.18\textwidth]{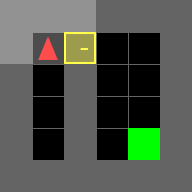} &
        \includegraphics[width=0.18\textwidth]{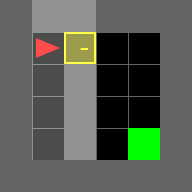}                                                                     \\
        \small{$t=5$}                                                              & \small{$t=6$}  & \small{$t=7$}  & \small{$t=8$}  & \small{$t=9$}  \\

        \includegraphics[width=0.18\textwidth]{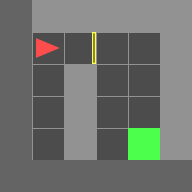} &
        \includegraphics[width=0.18\textwidth]{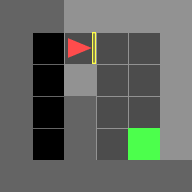} &
        \includegraphics[width=0.18\textwidth]{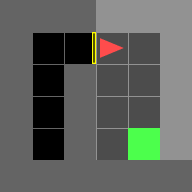} &
        \includegraphics[width=0.18\textwidth]{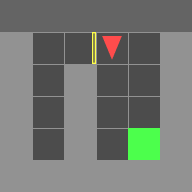} &
        \includegraphics[width=0.18\textwidth]{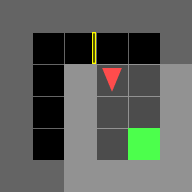}                                                                     \\
        \small{$t=10$}                                                             & \small{$t=11$} & \small{$t=12$} & \small{$t=13$} & \small{$t=14$} \\

        \includegraphics[width=0.18\textwidth]{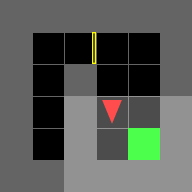} &
        \includegraphics[width=0.18\textwidth]{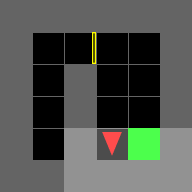} &
        \includegraphics[width=0.18\textwidth]{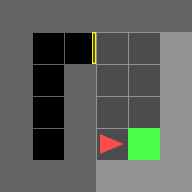} &
        \includegraphics[width=0.18\textwidth]{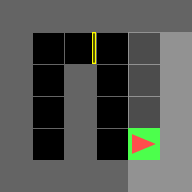} &
        \includegraphics[width=0.18\textwidth]{figures/minigrid/efe/frame_018.png}                                                                     \\
        \small{$t=15$}                                                             & \small{$t=16$} & \small{$t=17$} & \small{$t=18$} & \small{$t=19$} \\

        \includegraphics[width=0.18\textwidth]{figures/minigrid/efe/frame_018.png} &
        \includegraphics[width=0.18\textwidth]{figures/minigrid/efe/frame_018.png} &
        \includegraphics[width=0.18\textwidth]{figures/minigrid/efe/frame_018.png} &
        \includegraphics[width=0.18\textwidth]{figures/minigrid/efe/frame_018.png} &
        \includegraphics[width=0.18\textwidth]{figures/minigrid/efe/frame_018.png}                                                                     \\
        \small{$t=20$}                                                             & \small{$t=21$} & \small{$t=22$} & \small{$t=23$} & \small{$t=24$} \\
    \end{tabular}
    \caption{Visualization of the agent's trajectory in the Minigrid environment using EFE-based control. The 5$\times$5 grid shows the sequential frames of the agent's movement.}
    \label{fig:efe_trajectory_grid}
\end{figure}

\begin{figure}[tbh]
    \centering
    \begin{tabular}{ccccc}
        \includegraphics[width=0.18\textwidth]{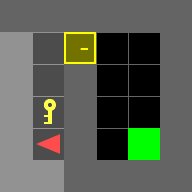} &
        \includegraphics[width=0.18\textwidth]{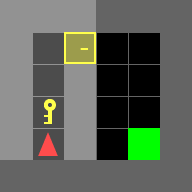} &
        \includegraphics[width=0.18\textwidth]{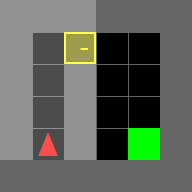} &
        \includegraphics[width=0.18\textwidth]{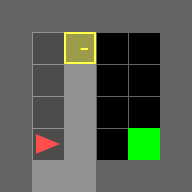} &
        \includegraphics[width=0.18\textwidth]{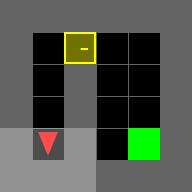}                                                                     \\
        \small{$t=0$}                                                                    & \small{$t=1$}  & \small{$t=2$}  & \small{$t=3$}  & \small{$t=4$}  \\

        \includegraphics[width=0.18\textwidth]{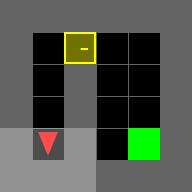} &
        \includegraphics[width=0.18\textwidth]{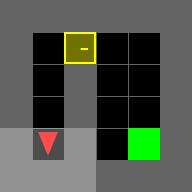} &
        \includegraphics[width=0.18\textwidth]{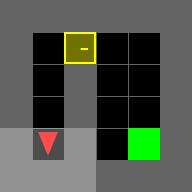} &
        \includegraphics[width=0.18\textwidth]{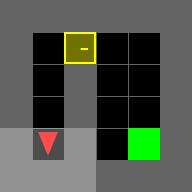} &
        \includegraphics[width=0.18\textwidth]{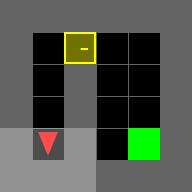}                                                                     \\
        \small{$t=5$}                                                                    & \small{$t=6$}  & \small{$t=7$}  & \small{$t=8$}  & \small{$t=9$}  \\

        \includegraphics[width=0.18\textwidth]{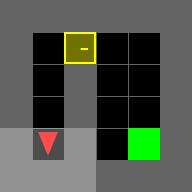} &
        \includegraphics[width=0.18\textwidth]{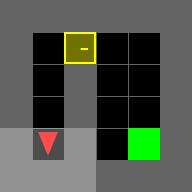} &
        \includegraphics[width=0.18\textwidth]{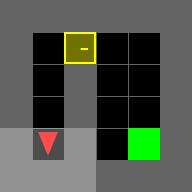} &
        \includegraphics[width=0.18\textwidth]{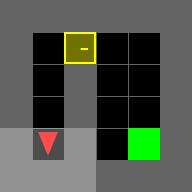} &
        \includegraphics[width=0.18\textwidth]{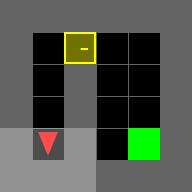}                                                                     \\
        \small{$t=10$}                                                                   & \small{$t=11$} & \small{$t=12$} & \small{$t=13$} & \small{$t=14$} \\

        \includegraphics[width=0.18\textwidth]{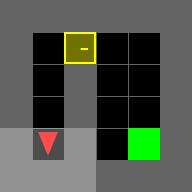} &
        \includegraphics[width=0.18\textwidth]{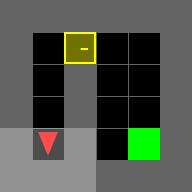} &
        \includegraphics[width=0.18\textwidth]{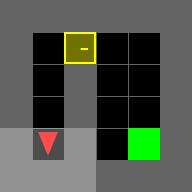} &
        \includegraphics[width=0.18\textwidth]{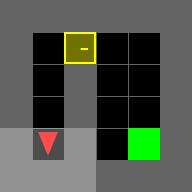} &
        \includegraphics[width=0.18\textwidth]{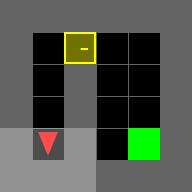}                                                                     \\
        \small{$t=15$}                                                                   & \small{$t=16$} & \small{$t=17$} & \small{$t=18$} & \small{$t=19$} \\

        \includegraphics[width=0.18\textwidth]{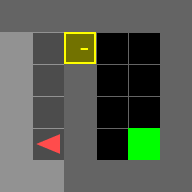} &
        \includegraphics[width=0.18\textwidth]{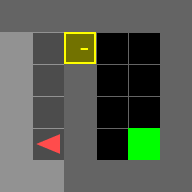} &
        \includegraphics[width=0.18\textwidth]{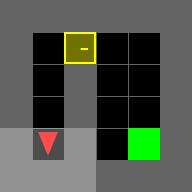} &
        \includegraphics[width=0.18\textwidth]{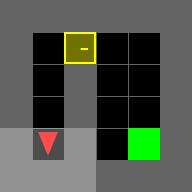} &
        \includegraphics[width=0.18\textwidth]{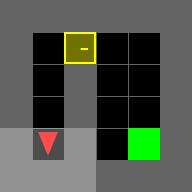}                                                                     \\
        \small{$t=20$}                                                                   & \small{$t=21$} & \small{$t=22$} & \small{$t=23$} & \small{$t=24$} \\
    \end{tabular}
    \caption{Visualization of the agent's trajectory in the Minigrid environment using KL control. The 5$\times$5 grid shows the sequential frames of the agent's movement throughout the episode.}
    \label{fig:klcontrol_trajectory_grid}
\end{figure}

\end{document}